\definecolor {processblue}{cmyk}{0.96,0,0,0}
\newtheorem{theorem}{Theorem}[section]
\newtheorem{lemma}[theorem]{Lemma}
\newtheorem{remark}[theorem]{Remark}
\DeclareMathOperator{\KL}{KL}
\title{A Deep Learning Approach to Unsupervised Ensemble Learning}
\author[1]{Uri Shaham}
\author[2]{Xiuyuan Cheng}
\author[3]{Omer Dror}
\author[3]{Ariel Jaffe}
\author[3]{Boaz Nadler}
\author[1]{Joseph Chang}
\author[4]{Yuval Kluger}
\affil[1]{Department of Statistics, Yale University}
\affil[2]{Program of Applied Mathematics, Yale university}
\affil[3]{Faculty of Mathematics and Computer Science, Weizmann Institute}
\affil[4]{Department of Pathology, Yale University}
\date{}                                           
\begin{document}
\maketitle

\begin{abstract} 
We show how deep learning methods can be applied in the context of crowdsourcing and unsupervised ensemble learning. 
First, we prove that the popular model of Dawid and Skene, which assumes that all classifiers are
conditionally independent, is {\em equivalent} to a Restricted Boltzmann Machine (RBM) with a single hidden node. 
Hence, under this model, the posterior probabilities of the true labels can be instead estimated via a trained RBM.
Next, to address the more general case, where classifiers may strongly violate the conditional independence assumption, 
we propose to apply RBM-based Deep Neural Net (DNN).
Experimental results on various simulated and real-world datasets demonstrate that our proposed DNN approach 
outperforms other state-of-the-art methods, in particular when the data violates the conditional independence assumption. 

\end{abstract} 

\section {Introduction}
In recent years, crowdsourcing applications gained significant popularity, and consequently much academic attention. At the same time, deep learning has become a major tool in machine learning and artificial intelligence, demonstrating impressive performance in several applications, including computer vision, speech recognition and natural language processing.  

The goal of this paper is to show that deep learning methods can also be applied to the areas of crowdsourcing and unsupervised ensemble learning, and provide state-of-the-art results. 
In unsupervised ensemble learning, one is given the predictions of $d$ classifiers on a set of $n$ instances and the goal is to recover the true, unknown label of each instance.
\citet{dawid1979maximum} were among the first to consider such a setup. They assumed that the classifiers are conditionally independent given the true labels. 
We refer to this model as the {\em DS} model and also as the {\em Conditional Independence model}.

Despite its simplicity, computing the maximum likelihood estimates of the classifiers' accuracies and the true labels in the DS model is a non-convex optimization problem. In their paper, Dawid and Skene estimated these quantities by the EM algorithm, which is only guaranteed to converge to a local optimum. In recent years, several authors developed computationally efficient spectral methods that are asymptotically consistent under the DS\ model, see   \citet{zhang2014spectral, parisi2014ranking, jain2013learning}; \citet{jaffe2014estimating} and references therein. 

The model of Dawid and Skene relied on two key assumptions that typically do not  hold in practice:\ (i) that classifiers make perfectly independent errors; and (ii)\ that these errors are uniformly distributed across all instances. To address the second issue above, several authors proposed richer models, that include parameters such as instance difficulty and varying skills of annotators across different regions of the input space, see for example  \citet{raykar2010learning}, \citet{whitehill2009whose} and \citet{welinder2010multidimensional}. 

In contrast, relatively few works considered relaxations of the conditional independence assumption: \citet{platanios2014estimating} proposed to estimate the accuracies of possibly dependent classifiers, via their agreement rates over classifier groups of different sizes. 
\citet{donmez2010unsupervised} proposed a model with pairwise interactions between all classifiers. Closest to our approach is the work of \citet{jaffe2015unsupervised}, who assumed that some of the classifiers may be conditionally dependent, yet their dependency structure can be accurately described by a tree of depth 2.

In this manuscript, we propose a deep learning approach to unsupervised ensemble learning problems with possibly dependent classifiers, where the conditional independence assumption is strongly violated. 
We make the following contributions. First, we show that the DS model has an equivalent parametrization in terms of a Restricted Boltzmann Machine (RBM) with a single hidden node. Hence, under this model, the posterior probability of the true labels can be estimated from a trained RBM. 
Next, to tackle violations of conditional independence, we show how a RBM-based Deep Neural Net (DNN) can be  applied to unsupervised ensemble learning, and propose a heuristic for determining the DNN architecture. 
Experimentally, we compare our approach to several state-of-the-art methods that are based on the conditional independence assumption and relaxations of it. We show that our DNN approach often performs better than the other methods on both simulated and real world datasets. Remarkably, we demonstrate that in some cases, while the raw representation of the data contains correlated features, the learned features in the last hidden layer are almost perfectly uncorrelated.

The structure of this manuscript is as follows: in Section~\ref{sec:probSetup} we give a formal definition of the problem. 
A brief background on RBMs is given in Section~\ref{sec:rbm}. 
In Section~\ref{sec:condIndCase} we show how RBMs can be used to predict the true labels, under the assumption of conditional independence. 
In Section~\ref{sec:RBM-DNN} we describe how to estimate the labels using a RBM-based DNN.
Experimental results are reported in Section~\ref{sec:experimentalResults}.
The manuscript concludes with a brief summary in Section~\ref{sec:conclusions}. 
Proofs appear in the appendix.


\subsection{Notation}
Throughout this manuscript, $X,H,Y$ are random variables, $p_\theta,p_\lambda$ are probability densities, parametrized by $\theta,\lambda$, respectively. We think of $p_\theta$ as the distribution generating the data and of $p_\lambda$ as the RBM model distribution.
When the context is clear, we occasionally write $p(x)$ as a shorthand for $p(X=x)$.
The dimensions of the input data and the sample size are denoted by $d$ and $n$, respectively.
We use $\sigma(\cdot)$ to denote the sigmoid function 
\begin{equation}
\sigma(z) = \frac{1}{1+e^{-z}}.\label{eq:sigmoid}
\end{equation}

\section{Problem Setup}
\label{sec:probSetup}
Let $X \in \{0,1 \}^d,\; Y \in \{0,1\}$ be random variables. We refer to $Y$ as the label of $X$. 
The pair $(X,Y)$ has a joint distribution, parametrized by $\theta$ and denoted by $p_\theta(X,Y)$, which  is given by 
\begin{equation}
p_\theta(X,Y) = p_\theta(Y)p_\theta(X|Y). \notag
\end{equation}
The joint distribution $p_\theta(X,Y)$ is not known to us, and neither are the marginals $p_\theta(X),p_\theta(Y)$.
Let $(x^{(1)},y^{(1)}),\ldots,(x^{(n)},y^{(n)})$ be $n$ i.i.d samples from $p_\theta(X,Y)$. 
In unsupervised ensemble learning, we observe $x^{(1)},\ldots,x^{(n)}$ and the learning task is to recover $y^{(1)},\ldots,y^{(n)}$. 
In this application, the binary vector $X = (X_1,\ldots,X_d)^T$ contains the predictions of $d$ classifiers or annotators on an instance, whose label $Y$ is unobserved. 


\subsection {The Conditional Independence Model}
In their seminal  paper, \citet{dawid1979maximum}, assumed that the conditional distribution $p_\theta(X|Y)$ factorizes, i.e.,
 \begin{equation}
p_\theta(X|Y) \equiv \prod_{i=1}^d p_\theta(X_i|Y). \label{eq:condInd}
 \end{equation}
Eq.~\eqref{eq:condInd}, also known as the {\em conditional independence model}, is depicted in Figure~\ref{fig:NB}.
\begin{figure}[t]
  \centering
  \begin {tikzpicture}[-latex ,auto ,node distance =4 cm and 5cm ,on grid ,
                semithick ,
                state/.style ={ circle ,top color =white , bottom color = processblue!20 ,
                        draw,processblue , text=blue , minimum width =1 cm}]
                \node[state] (Y) at (2,2) {$Y$};
                \node[state] (f1) at (-1,0) {$X_1$};
                \node[state] (fi) at (1.5,0) {$X_i$};
                \node[state] (fm) at (4,0) {$X_d$};
                \path (Y) edge node [above =0.15 cm,left = 0.15cm] {$\psi_1,\eta_1$} (f1);
                \path (Y) edge node {$\psi_i,\eta_i$} (fi);
                \path (Y) edge node {$\psi_d,\eta_d$} (fm);
        \end{tikzpicture}
  \caption{The conditional independence model, studied by \citet{dawid1979maximum}.}
  \label{fig:NB}
 \end{figure}
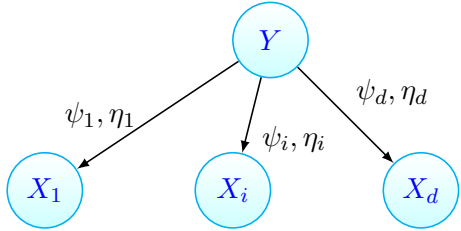
It is fully parametrized by $\theta = (\{\psi_i: i=1,...,d \}, \{\eta_i: i=1,...,d \}, \pi)$, where
\begin{align}
&\psi_i = \Pr(X_i=1|Y=1),\; \eta_i = \Pr(X_i=0|Y=0), \notag\\
&\pi = \Pr(Y=1). \notag
\end{align}
$\psi_i, \eta_i$ are often referred to as sensitivity and specificity, respectively.  Under the interpretation of the $X_i$'s being classifiers, the sensitivity and specificity quantify the competence of the classifiers or annotators and the conditional independence assumption means that all $d$ classifiers make independent errors.

The conditional independence model is often overly simplistic. In this manuscript we propose to apply deep learning techniques, specifically RBM-based DNNs, for unsupervised ensemble learning problems, where the conditional independence is not likely to hold. The following section gives essential background on RBMs, section~\ref{sec:condIndCase} shows that a RBM with a single hidden node is equivalent to the conditional independence model, and section~\ref{sec:RBM-DNN} presents our RBM-based DNN approach.


\section {Restricted Boltzmann Machines} \label{sec:rbm}
A Restricted Boltzmann Machine (RBM) is an undirected bipartite graphical model, consisting of a set $X$ of $d$ visible binary random variables and a set $H$ of $m$ hidden binary random variables, arranged in two layers, which are fully connected to each other. An illustration of a RBM is depicted in Figure~\ref{fig:RBM}.
\begin{figure}[ht!]
  \centering
  \begin {tikzpicture}[auto ,node distance =4 cm and 5cm ,on grid ,
                semithick ,
                state/.style ={ circle ,top color =white , bottom color = processblue!20 ,
                        draw,processblue , text=blue , minimum width =1 cm}]
          \node[state] (H1) at (8.5,3) {$H_1$};
                \node[state] (Hm) at (11.5,3) {$H_m$};
                \node[state] (X1) at (7,0) {$X_1$};
                \node[state] (Xi) at (10,0) {$X_i$};
                \node[state] (Xd) at (13,0) {$X_d$};
                \path (H1) edge node [above =0.15 cm,left = 0.15cm] {$w_{11}$}(X1);
                \path (H1) edge node [left=0.4 cm, above] {$w_{1i}$}(Xi);
                \path (H1) edge node [below=0.5 cm, right] {$w_{1d}$}(Xd);              
                \path (Hm) edge node [below =0.35 cm,right = 0.25cm] {}(X1);
                \path (Hm) edge node [above =0.25 cm,left = 0.15cm] {}(Xi);
                \path (Hm) edge node [above =0.25 cm,left = 0.15cm] {}(Xd);
        \end{tikzpicture}
  \caption{A RBM with $d$ visible and $m$ hidden units.}
  \label{fig:RBM}
 \end{figure}
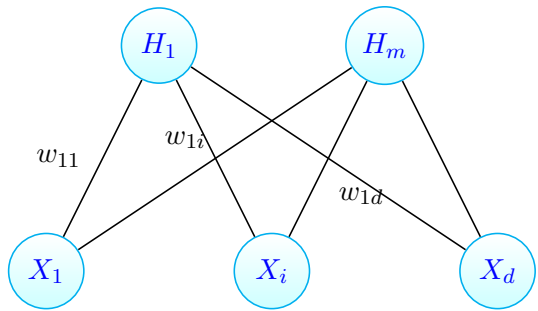
A RBM is parametrized by $\lambda = (W, a,b)$, where $W$ is the weight matrix of the connections between the visible and hidden units, and $a,b$ are the bias vectors of the visible and hidden layers, respectively. Each configuration $(X=x,H=h)$ of a RBM is associated with the following energy
\begin{equation}
E_\lambda(x,h) = -(a^Tx + b^Th + x^TWh)\label{eq:E}
\end{equation}
which defines the probability of the configuration
\begin{equation}
p_\lambda(X=x,H=h) = \frac{e^{-E_\lambda(x,h)}}{Z},\notag
\end{equation}
where $Z \equiv \sum_{x,h}e^{-E_\lambda(x,h)}$ is the \textit{partition function}. The bipartite structure of the RBM implies factorial conditional probabilities
\begin{equation}
p_\lambda(X|H) = \prod_i p_\lambda(X_i|H),\;\ \ \  p_\lambda(H|X) = \prod_j p_\lambda(H_j|X),\notag
\end{equation}
given by
\begin{align}
&p_\lambda(X_i=1|H) = \sigma (a_i+W_{i.}H) \notag\\
&p_\lambda(H_j=1|X) = \sigma (b_j + X^TW_{.j}),\notag
\end{align}
where $\sigma(z)$ is the sigmoid function defined in equation~\eqref{eq:sigmoid}, $W_{i.}$ is the $i$-th row of $W$ and $W_{.j}$ is its $j$-th column.

Given iid training data $x^{(1)},..,x^{(n)}~\sim p_\theta(X)$, the RBM parameters $\lambda = (W,a,b)$ are typically tuned to maximize the log-likelihood of the training data, where the likelihood that the RBM associates with a vector $x$ is given by
\begin{equation}
p_\lambda(X = x) =  \sum_h p_\lambda(X = x,H = h).\notag
\end{equation}
A popular approach to learn the RBM parameters is via gradient-based optimization, where the gradients are approximated using contrastive divergence \citep{hinton2006fast, bengio2009learning}.


\section {RBM in the Conditional Independence Case} \label{sec:condIndCase}
In this section we show that given observed data $x^{(1)},\ldots,x^{(n)} \in \mathbb \{0,1 \}^d$ from the conditional independence model of Eq.~\eqref{eq:condInd}, the posterior probabilities of the true, unknown labels $y^{(1)},\ldots,y^{(n)}$ can be consistently estimated via a RBM with a single hidden node. 

We begin by showing that there is a bijective map from the parameters $\lambda$ of a RBM with a single hidden node to the parameters $\theta$ of the conditional independence model, such that the joint distribution specified by the RBM is {\em equivalent} to that of the conditional independence model.

\begin{lemma}\label{lemma:map}
The joint probability $p_\lambda(X=x,H=y)$ of a RBM with parameters $\lambda = (a,b,W)$ is equivalent to the joint probability $p_\theta(X=x,Y=y)$ of a conditional independence model with parameters $\theta=(\{\psi_i\}, \{\eta_i\},\pi)$ given by
\begin{align}
& \psi_i \equiv \sigma(a_i+W_i),\; \eta_i \equiv 1-\sigma(a_i)\notag\\
&\pi \equiv \frac{\sum_{x \in \{0,1\}^d} e^{a^Tx + b  + x^TW}}{\sum_{x \in \{0,1\}^d}\left(e^{a^Tx} + e^{a^Tx + b + x^TW}\right)} \notag
\end{align} 
Furthermore, the map $\lambda \mapsto \theta$ is a bijection.
\end{lemma}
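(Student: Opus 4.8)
The plan is to specialize the general RBM formulas to a single hidden node ($m=1$), so that $H=H_1\in\{0,1\}$, the weight matrix $W$ reduces to a vector $(W_1,\dots,W_d)$, and $b$ is a scalar; I then identify the hidden node $H$ with the label $Y$ and show the two joint distributions coincide. The key structural fact, already recorded in the excerpt, is the bipartite factorization $p_\lambda(X=x,H=h)=p_\lambda(H=h)\prod_i p_\lambda(X_i=x_i\mid H=h)$, which is exactly the form of the conditional independence model with $Y=H$. Thus the equivalence reduces to matching the three groups of parameters.

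Next I would match the conditional parameters. Using the stated RBM formula $p_\lambda(X_i=1\mid H=h)=\sigma(a_i+W_i h)$, taking $h=1$ gives $p_\lambda(X_i=1\mid H=1)=\sigma(a_i+W_i)$, the claimed $\psi_i$, and taking $h=0$ gives $p_\lambda(X_i=0\mid H=0)=1-\sigma(a_i)$, the claimed $\eta_i$. To match $\pi$, I would compute the marginal $p_\lambda(H=1)$ by summing the joint over $x\in\{0,1\}^d$. Because the energy decouples across coordinates, the sum factorizes coordinatewise, and the numerator and denominator of the asserted expression are recognized as $\sum_x e^{a^Tx+b+x^TW}$ and the full partition function $Z=\sum_x\left(e^{a^Tx}+e^{a^Tx+b+x^TW}\right)$ respectively; hence $p_\lambda(H=1)$ equals the stated $\pi$. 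This completes the equivalence.

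For bijectivity I would exhibit the explicit inverse, exploiting its triangular structure. Since $\sigma:\mathbb{R}\to(0,1)$ is a bijection with inverse the logit, the relation $\eta_i=1-\sigma(a_i)$ recovers $a_i$ uniquely, and then $\psi_i=\sigma(a_i+W_i)$ recovers $W_i$ uniquely. With $a$ and $W$ determined, the quantities $A=\prod_i(1+e^{a_i})$ and $B=\prod_i(1+e^{a_i+W_i})$ are fixed positive constants, and the marginal constraint reads $\pi=e^{b}B/(A+e^{b}B)$. Solving yields $e^{b}=\pi A/\left((1-\pi)B\right)$, a unique $b\in\mathbb{R}$ for each $\pi\in(0,1)$; equivalently, $\pi$ is a strictly increasing function of $b$ mapping $\mathbb{R}$ onto $(0,1)$. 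This shows the map $\lambda\mapsto\theta$ from $\mathbb{R}^{2d+1}$ to $(0,1)^{2d+1}$ is invertible, hence a bijection.

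The conditional-probability identifications are routine, following immediately from the stated RBM formulas; the only computation requiring care is the partition-function bookkeeping for $\pi$, where one must correctly factor the sum over $x$ into a product over coordinates and recognize the resulting products as $A$ and $B$. I do not anticipate a genuine obstacle, since the triangular form of the inverse makes both the equivalence and the bijection transparent once the marginal $p_\lambda(H=1)$ has been computed.
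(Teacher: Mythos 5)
Your proposal is correct and follows essentially the same route as the paper's proof: identify $H$ with $Y$, match $\psi_i=\sigma(a_i+W_i)$, $\eta_i=1-\sigma(a_i)$, and $\pi=p_\lambda(H=1)$, then invert via the triangular structure ($a_i$ from $\eta_i$, $W_i$ from $\psi_i$, $b$ from $\pi$). If anything, your explicit computation of the inverse via $e^{b}=\pi A/\bigl((1-\pi)B\bigr)$ makes the surjectivity step more concrete than the paper, which merely calls it straightforward.
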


We are now ready to prove the main result of this section, namely, that the posterior distribution of the true labels $y^{(1)},\ldots, y^{(n)}$ can be consistently estimated by a RBM with a single hidden node. To do so, we rely on a special case of a result proved by~\citet{chang1996full}, that provides conditions under which the parameters of the conditional independence model are identifiable.

\begin {lemma} \label{lemma:condInd}
Let $x^{(1)},..., x^{(n)}$ be observed data from the conditional independence model, specified by $p_\theta$. Assume that $\theta$ is such that for each $i=1,\ldots,d$, $X_i$ is not independent of $Y$ (i.e., each classifier is not just a random guess), and that $d \ge 3$.
 Let $\hat{\lambda}_\text{MLE}$ be a maximum likelihood parameter estimate of a RBM with a single hidden node. 
Then the RBM posterior probability $p_{\hat{\lambda}_\text{MLE}}(H=1|X=x)$ converges to the true posterior 
$p_\theta(Y=1|X=x)$, as $n \rightarrow \infty$.
\end{lemma}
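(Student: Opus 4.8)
The plan is to use Lemma~\ref{lemma:map} to transport the entire problem from the RBM parametrization into the conditional independence parametrization, where both identifiability and standard maximum-likelihood consistency are available, and then transport the conclusion back. The first step is to observe that fitting a single-hidden-node RBM by maximum likelihood is \emph{identical} to fitting the conditional independence model by maximum likelihood: since the bijection $\lambda \mapsto \theta$ of Lemma~\ref{lemma:map} identifies the joint laws $p_\lambda(X,H)$ and $p_\theta(X,Y)$, it identifies the observable marginals $p_\lambda(X)$ and $p_\theta(X)$, so $\hat\lambda_{\mathrm{MLE}}$ corresponds to a maximizer $\hat\theta_{\mathrm{MLE}}$ of the conditional independence likelihood, and the RBM posterior $p_{\hat\lambda_{\mathrm{MLE}}}(H=1 \mid X=x)$ equals the model posterior $p_{\hat\theta_{\mathrm{MLE}}}(Y=1 \mid X=x)$. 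This reduction matters because the $\theta$-parameter space is the compact product of the probability intervals $[0,1]$, whereas the RBM weights live on an unbounded set; passing to $\theta$-coordinates sidesteps the noncompactness/coercivity issues that would otherwise complicate a direct argument about $\hat\lambda_{\mathrm{MLE}}$.

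Second, I would invoke the identifiability result of \citet{chang1996full}: under the hypotheses $d \ge 3$ and $X_i \not\perp Y$ for every $i$, the true parameter $\theta_0$ is determined by the observable marginal $p_{\theta_0}(X)$, up to the unavoidable swap of the two latent states $Y=0 \leftrightarrow Y=1$. Equivalently, the map $\theta \mapsto p_\theta(X)$ is injective modulo this single symmetry. Because $X \in \{0,1\}^d$ is discrete, the normalized log-likelihood is, up to an additive $\theta$-free constant, the negative cross-entropy between the empirical distribution $\hat q_n$ and $p_\theta(X)$, so maximizing it is the same as minimizing $\KL(\hat q_n \,\|\, p_\theta(X))$ over the compact $\theta$-space. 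Since $\hat q_n \to p_{\theta_0}(X)$ almost surely by the law of large numbers and $\theta \mapsto p_\theta(X)$ is continuous, a standard Wald-type M-estimation argument forces any sequence of maximizers $\hat\theta_{\mathrm{MLE}}$ to converge to the orbit $\{\theta_0, \theta_0^{\mathrm{swap}}\}$.

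Third, I would push this convergence through the posterior map. As $\theta \mapsto p_\theta(Y=1 \mid X=x)$ is a smooth function of $\theta$ on the relevant region, convergence of $\hat\theta_{\mathrm{MLE}}$ to $\theta_0$ yields convergence of the estimated posterior $p_{\hat\theta_{\mathrm{MLE}}}(Y=1 \mid X=x)$ to the true posterior $p_{\theta_0}(Y=1 \mid X=x)$, which by the first step is exactly the RBM quantity in the statement.

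The main obstacle I anticipate is the latent-label symmetry: Chang's theorem pins down $\theta_0$ only up to swapping the roles of $Y=0$ and $Y=1$, and the two members of the orbit produce complementary posteriors, $p_\theta(Y=1 \mid X=x)$ and $1 - p_\theta(Y=1 \mid X=x)$. To conclude convergence to the \emph{true} posterior rather than its complement, I would fix a labeling convention selecting a unique representative of the orbit (for instance, orienting the hidden state so that its agreement/majority direction aligns with $Y=1$, or imposing a sign or ordering constraint on the parameters) and check that this convention is consistent with the bijection of Lemma~\ref{lemma:map}. A secondary technical point to verify is that the maximum is attained and that boundary behavior (probabilities at $0$ or $1$) does not break consistency; here the assumption $X_i \not\perp Y$ is precisely what keeps $\theta_0$ in the interior region where identifiability holds and the continuity argument in the final step goes through cleanly.
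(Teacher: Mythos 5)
Your proposal is correct and follows essentially the same route as the paper: Chang's identifiability theorem, the bijection of Lemma~\ref{lemma:map}, consistency of the MLE, and continuity of the posterior map. You are somewhat more careful than the paper on two points it glosses over --- the Wald-type argument justifying MLE consistency (made cleaner by working in the compact $\theta$-coordinates) and the global label-flip ambiguity, which the paper defers to Remark~\ref{remark:flips} rather than handling inside the proof.
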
 \label{lemma:RBMSolves}

\begin{remark} \label{remark:flips}
The identifiability of the parameters is up to a single global $0/1$ label flip. This  means that one recovers either $p_\theta(Y=y|X)$ or $p_\theta(Y=1-y|X)$. Assuming that on average, the $X_i$'s are more accurate than a random guess, this sign ambiguity can be resolved by comparing the predictions to the majority vote decision.
\end{remark}

\begin{remark}
Lemma~\ref{lemma:condInd} assumes that we found the MLE of the RBM parameters. Obtaining such a MLE is problematic for two main reasons. First, RBMs are typically trained to maximize a proxy for the likelihood, as the true likelihood is not tractable. Second, the RBM likelihood function is not concave, hence there are no guarantees that after training a RBM one obtains the maximum likelihood parameter $\hat{\lambda}_\text{MLE}$.

\end{remark}


\section {RBM-based Deep Neural Net} \label{sec:RBM-DNN}
In many practical settings, the variables $X_1,\ldots,X_d$ are not conditionally independent. 
Fitting a conditionally independent model to such data may yield highly sub-optimal predictions for the true labels $y_i$. 
To tackle this general case, we propose to train a RBM-based Deep Neural Net (DNN) and use it to estimate the posterior probabilities $p_\theta(Y|X)$.
In such a DNN, the hidden layer of each RBM is the input for the successive RBM. As suggested by \citet{hinton2006fast}, the RBMs are trained one at a time, bottom to top, i.e., the DNN is trained in a layer-wise fashion.
Specifically, given training data $x^{(1)},\ldots,x^{(n)} \in \{0,1\}^d$, we start by training the bottom RBM, and then obtain the first layer hidden representation of the data by sampling $h^{(i)}$ from the conditional RBM distribution $p_\lambda (H|X = x^{(i)})$. The vectors $h^{(1)},\ldots,h^{(n)}$ are then used as a training set for the second RBM and so on. 

In the case considered in this manuscript, where the true label $y$ is binary, the upper-most RBM in the DNN has a single hidden unit, from which the posterior probability $p_\theta(Y|X)$ can be estimated. Such a DNN is depicted in Figure~\ref{fig:dnn}.
\tikzset{decorate sep/.style 2 args=
{decorate,decoration={shape backgrounds,shape=circle,shape size=#1,shape sep=#2}}}

\begin{figure}[ht!]
  \centering
  \begin {tikzpicture}[auto ,node distance =4 cm and 5cm ,on grid ,
                semithick ,
                state/.style ={ circle ,top color =white , bottom color = processblue!20 ,
                        draw,processblue , text=blue , minimum width =1.2 cm}]
                \node[state] (Yhat) at (10,6) {$\hat{Y}$};
          \node[state] (H21) at (7,4) {$H^2_1$};
                \node[state] (H2i) at (10,4) {$H^2_i$};
                \node[state] (H2m2) at (13,4) {$H^2_{m_2}$};
                \node[state] (H11) at (7,2) {$H^1_1$};
                \node[state] (H1i) at (10,2) {$H^1_i$};
                \node[state] (H1m1) at (13,2) {$H^1_{m_1}$};
                \node[state] (X1) at (7,0) {$X_1$};
                \node[state] (Xi) at (10,0) {$X_i$};
                \node[state] (Xd) at (13,0) {$X_d$};
                \path (H11) edge node [above =0.15 cm,left = 0.15cm] {}(X1);
                \path (H11) edge node [above =0.15 cm,left = 0.15cm] {}(Xi);
                \path (H11) edge node [above =0.15 cm,left = 0.15cm] {}(Xd);
                \path (H1i) edge node [above =0.15 cm,left = 0.15cm] {}(X1);
                \path (H1i) edge node [above =0.15 cm,left = 0.15cm] {}(Xi);
                \path (H1i) edge node [above =0.15 cm,left = 0.15cm] {}(Xd);            
                \path (H1m1) edge node [above =0.15 cm,left = 0.15cm] {}(X1);
                \path (H1m1) edge node [above =0.15 cm,left = 0.15cm] {}(Xi);
                \path (H1m1) edge node [above =0.15 cm,left = 0.15cm] {}(Xd);
                \path (H21) edge node [above =0.15 cm,left = 0.15cm] {}(H11);
                \path (H21) edge node [above =0.15 cm,left = 0.15cm] {}(H1i);
                \path (H21) edge node [above =0.15 cm,left = 0.15cm] {}(H1m1);
                \path (H2i) edge node [above =0.15 cm,left = 0.15cm] {}(H11);
                \path (H2i) edge node [above =0.15 cm,left = 0.15cm] {}(H1i);
                \path (H2i) edge node [above =0.15 cm,left = 0.15cm] {}(H1m1);          
                \path (H2m2) edge node [above =0.15 cm,left = 0.15cm] {}(H11);
                \path (H2m2) edge node [above =0.15 cm,left = 0.15cm] {}(H1i);
                \path (H2m2) edge node [above =0.15 cm,left = 0.15cm] {}(H1m1); 
                \path (Yhat) edge node [above =0.15 cm,left = 0.15cm] {}(H21);
                \path (Yhat) edge node [above =0.15 cm,left = 0.15cm] {}(H2i);
                \path (Yhat) edge node [above =0.15 cm,left = 0.15cm] {}(H2m2); 
                \draw[decorate sep={0.5mm}{4mm},fill] (8.1,0) -- (9,0);
                \draw[decorate sep={0.5mm}{4mm},fill] (11.1,0) -- (12,0);
                \draw[decorate sep={0.5mm}{4mm},fill] (8.1,2) -- (9,2);
                \draw[decorate sep={0.5mm}{4mm},fill] (11.1,2) -- (12,2);
                \draw[decorate sep={0.5mm}{4mm},fill] (8.1,4) -- (9,4);
                \draw[decorate sep={0.5mm}{4mm},fill] (11.1,4) -- (12,4);
                
        \end{tikzpicture}
  \caption{A sketch of RBM-based DNN with two hidden layers.}
  \label{fig:dnn}
 \end{figure}
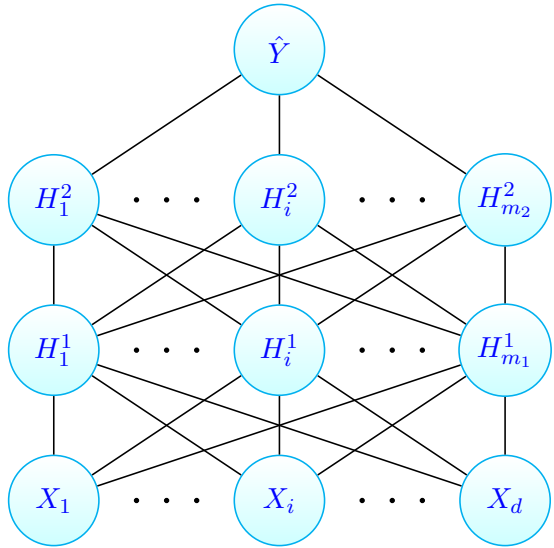


\subsection {Motivation}\label{sec:mot}
Deep learning algorithms have recently achieved state-of-the-art performance in a wide range of applications~\cite{lecun2015deep}. 
While a rigorous theoretical understanding of deep nets is still lacking, many researchers believe that a key property in their success is their ability to disentangle factors of variation in the inputs; see for example~\citet{bengio2013representation},~\citet{tishby2015deep}, and~\citet{mehta2014exact}. 
That is, as one moves through the net, the hidden units become  less statistically dependent. 
We have seen in Section~\ref{sec:condIndCase} that given a representation in which the units are independent conditional on the true label, a single node RBM gives a consistent estimation of the true label posterior probability. 
Propagating the data through several RBM layers can hence be seen as a processing of the data, which reduces the conditional dependence of the units while preserving most of the information on the true label $Y$.
In Section~\ref{sec:experimentalResults} we will demonstrate cases where such decoupling does indeed happen in practice, i.e., although the original input variables $X_i$'s are not conditionally independent given the true label $Y$, after training, the units in the uppermost hidden layer are, remarkably,  approximately conditionally independent. Thus, the assumptions of the conditional independence model apply (with respect to the uppermost hidden layer $H^\text{last}$), and therefore one is able to consistently estimate the label posterior probability, $\Pr(Y|H^\text{last})$, as in Section~\ref{sec:condIndCase}. 

Another motivation for using deep nets with several hidden layers for unsupervised ensemble learning is their rich expressive power. 
In our setting, we wish to approximate the posterior probability $p(Y|X)$, which in general may be a complicated nonlinear function of $X$. When $p(Y|X)$ cannot be accurately estimated by a RBM with a single hidden node (i.e., when the conditional independence assumption of Dawid and Skene does not hold), a better approximation may be obtained from a deeper network.
Several works show that there exist functions that are significantly more efficiently represented by deeper networks, compared to shallower ones, where efficiency corresponds to the number of units. For example, \citet{montufar2014number} show that deep networks with piece-wise linear activations can represent functions with greater number of linear regions compared to shallow networks with the same number of units. In a recent work,~\citet{eldan2015power} give an example for a radial function that can be efficiently computed by a 3-layer network, while requiring exponentially many units to be approximated accurately by a 2-layer network.

Finally, we would like to emphasize that a RBM-based DNN is a discriminative model to estimate the posterior $p(Y|X)$.
In general, it may not correspond to any generative model~\cite{arora2015deep}. Indeed, there is no guarantee that the marginal distributions implied by two adjacent RBMs match. Yet, it can be shown (see Appendix~\ref{app:varInf}) that stacking RBMs is a variational inference procedure assuming a specific class of data generation models.
The nature of approximation of a top down generative model, where the data $X$ is generated from a label $Y$, by a RBM-based DNN is explored in Appendix~\ref{app:approxMeanField}.


\subsection {Predicting the Label from a Trained DNN}
Given a trained DNN and a sample $x\sim p_\theta(X)$, the label $y$ is estimated by propagating $x$ through the network. Specifically, the units of each layer can be set by either (i) sampling from the conditional distribution given the layer below, i.e., $h_j \sim p_\lambda(h_j|x)$, or (ii) by MAP estimate, setting each hidden unit $h_j = \arg\max_{h_j \in \{0,1\}}p_\lambda(h_j|x)$. Since the first option is stochastic, one may propagate $x$ through the net multiple times and average the outputs $p(y|x)$ to obtain an approximation of $\mathbb{E}(Y|X=x)$. 
Experimentally, we found both options to be equally effective, while each option slightly outperforms the other in some cases.


\subsection {Choosing the DNN Architecture}\label{sec:svd}
The specific DNN architecture (i.e., number and sizes of layers) might have a dramatic effect on the quality of predictions. To determine the number of units in each layer we employed the following procedure: we first train a RBM with $d$ hidden units. Next, we compute the singular value decomposition of the weight matrix $W$, and determine its rank (i.e., the number of sufficiently large singular values). Given that the rank is some $m \le d$, we re-train the RBM, setting the number of hidden units to be $m$. If $m >1$, we add another layer on top of the current layer, and proceed recursively. The process stops when $m=1$, so that the last layer of the DNN contains a single node. We refer to this method as {\em the SVD approach}.
In our experiments, as a rule of thumb, we set $m$ to be the minimal number of singular values (in descending order) whose cumulative sum is at least 95\% of the total sum.

This method takes advantage of the co-adaptation of hidden units, which is a well known phenomenon in RBM training (see, for example,~\cite{hinton2012improving}). The term {\em co-adaptation} describes a situation where several hidden units tend to behave very similarly; this implies that the rank of the weight matrix might be small, although the number of hidden units may be larger.


\section{Experimental Results} \label{sec:experimentalResults}
In this section we compare the performance of the proposed DNN approach to several other approaches, and report experimental results obtained on four simulated data sets and eight real world data sets, from two different domains.
All our datasets, as well as the scripts reproducing the reported results are publicly available  at \url{https://github.com/ushaham/RBMpaper}.
\footnote{
Our scripts are based on the publicly available code in Hinton's website \url{http://www.cs.toronto.edu/~hinton/MatlabForSciencePaper.html}. }.

Specifically, we compare between the following unsupervised ensemble methods:
\begin{itemize}
\item \textbf{Vote.} Majority voting, which is the maximum likelihood prediction, assuming that all classifiers are conditionally independent and have the same accuracy.
\item \textbf{DS.}  Approximate maximum likelihood predictions under the Dawid and Skene model. Specifically, we use Spectral Meta Learner~\citep{parisi2014ranking}, and Restricted Likelihood~\citep{jaffe2014estimating}.
\item \textbf{CUBAM} The method of~\citet{welinder2010multidimensional}, which assumes conditional independence, but allows the accuracy of each classifier to vary across different regions of the input domain.
\item \textbf{L-SML} Latent SML \citep{jaffe2015unsupervised}. This method relaxes the conditional independence assumption to a depth 2 tree model.
\item \textbf{DNN} The approach presented in this manuscript, with the depth and number of hidden units in each layer determined by the SVD approach, described in Section~\ref{sec:svd}. 
\end{itemize}

Following ~\citet{jaffe2015unsupervised}, the performance measure we chose is the balanced accuracy, given by
\begin{align}
&\frac{\sum \mathbb{I}\{\text{true label is 0 and predicted label is 0} \}}{2\sum \mathbb{I}\{\text{true label is 0} \}} \notag \\
+ &\frac{\sum \mathbb{I}\{\text{true label is 1 and predicted label is 1} \}}{2\sum \mathbb{I}\{\text{true label is 1} \}}, \notag
\end{align}
where $\mathbb{I}\{\cdot\}$ is the indicator function.


\subsection {Simulated Datasets}\label{sec:simu}

In this experiment we carefully generated four synthetic datasets, in order to demonstrate the performance of the DNN approach in several specific scenarios. In all four datasets the observed data is a $ n \times d$ binary matrix, with input dimension $d=15$ and sample size $n=10,000$. A detailed description of the datasets generation process is given in Appendix~\ref{app:datasets}.
\begin{itemize}

\item \textbf{CondInd} A dataset where the conditional independence holds, and $10$ of the $15$ classifiers are in fact random guess.

\item \textbf{Tree15-3-1} A dataset generated from a depth-2 tree with layer sizes 1,3,15. Every node in the intermediate layer is connected to five nodes in the bottom layer. This dataset is generated from the model considered by L-SML, and does not satisfy the conditional independence assumption, as is shown in Figure~\ref{fig:tree}.

\item \textbf{LayeredGraph15-5-5-1} A dataset generated from a depth-3 layered graph, with layer sizes 1,5,5,15. In this case, the conditional independence assumption does not hold, although in practice the amount of dependence in the data is not high (see Figure~\ref{fig:datasets}).

\item \textbf{TruncatedGaussian.} Here \(X=(1+\mbox{sign}(Z))/2\), where the r.v. $Z$ follows a a mixture of two $d$-dimensional Gaussians with different means and same covariance matrix. The label $Y$ indicates the specific Gaussian from which $X$ is sampled. In this case, the data is highly dependent, as can be seen in Figure~\ref{fig:datasets}.

\end{itemize}
The results are summarized in Table~\ref{tab:results1}. Along with the five unsupervised methods, the table also shows the accuracy of a supervised learner and the estimated accuracy of the Bayes-optimal classifier. The supervised learner is a  Multi Layer Perceptron (MLP) with two hidden layers of sizes 4 and 2, that was trained on a dataset with $n=10,000$ samples (independent of the test dataset). The Bayes-optimal approximated accuracy was computed on a  sample of size $10,000$, with the true posterior probabilities of all \(2^{d}\) possible binary vectors estimated using a sample of size $10^6$ from the corresponding model.  
\begin{table*}[t]
\centering
\caption{Balanced accuracy of various unsupervised ensemble methods on the four synthetic datasets, along with a supervised learner (SUP), and the Bayes optimal classifier (Bayes-Opt). The results are presented as mean $\pm$ standard deviation, based on 5 repetitions, where in each repetition a new dataset was sampled from the model. The numbers in brackets denote the architecture of the DNN, found by the SVD approach.}
\vskip 0.15in
\begin{tabular}{ l || c | c  | c | c}
  \hline                        
  method  & condInd & Tree15-3-1 & LG15-5-5-1 & TG \\ \hline\hline      
  Vote  & 75.93 $\pm$ 0.5& 93.45 $\pm$ 0.19 & 76.61 $\pm$ 0.09 & 80.14 $\pm$ 0.4\\ \hline    
  DS  & \textbf{94.78 $\pm$ 0.13}& 92.68 $\pm$ 0.14 & 86.36 $\pm$ 0.2 & 82.03 $\pm$ 0.27\\ \hline
  CUBAM  & 91.96 $\pm$ 0.18 & 90.74 $\pm$ 0.3 & 77.12 $\pm$ 0.26 & 83.43 $\pm$ 0.31\\ \hline
  L-SML  & 55.94 $\pm$ 21.88 &\textbf{95.83 $\pm$ 0.15}& 85.87 $\pm$ 0.21 & 79.5 $\pm$ 1.35\\ \hline
  DNN  & \textbf{94.78 $\pm$ 0.13} (15-1)& 95.13 $\pm$ 0.71 (15-3-1) & \textbf{86.83 $\pm$ 0.2} (15-4-1) & \textbf{88.09 $\pm$ 0.52} (15-3-1)\\ \hline\hline 
  SUP  & 94.45 $\pm$ 0.11 & 95.54 $\pm$ 0.27 & 87.01 $\pm$ 0.18 & 90.8 $\pm$ 0.4 \\ \hline
  Bayes-Opt  & 95.32 & 96.12 & 87.05 & 91.39\\ \hline
  \end{tabular} 
  \label{tab:results1}
\end{table*}

On all of the above datasets, the DNN always outperformed the majority vote rule and CUBAM.
On the CondInd dataset, the DNN performs similarly to DS, and significantly better than the other methods. Despite being unsupervised, on this dataset both methods perform slightly better than the specific supervised learner we considered, and around the Bayes-optimal accuracy. The architecture determined by the SVD approach in this case is indeed a single RBM (with a single hidden node). The weight matrix of the RBM is shown in Figure~\ref{fig:rbmCondind}, and corresponds to the fact that only the first five classifiers actually contain information about the true label in this dataset.
\begin{figure}[]
  \centering
  \includegraphics[width=3in]{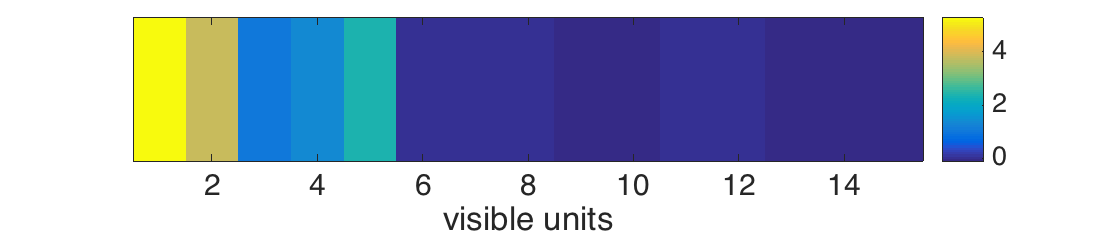}
  \caption{The RBM weight vector on the condInd dataset. The hidden unit is strongly connected only to the first five visible units, reflecting the fact that in an unsupervised manner, the RBM detected that the remaining units are random guess classifiers.}
  \label{fig:rbmCondind}
 \end{figure}

Figure~\ref{fig:paramsRecovery} shows the recovery of the true conditional independence model parameters $\{\psi_i, \eta_i \}$ of a similar conditional independent dataset (however with no random guess classifiers) from a RBM with a single hidden node, using the map in Lemma~\ref{lemma:map}.
\begin{figure}[]
  \centering
  \includegraphics[width=3in,height=1.8in]{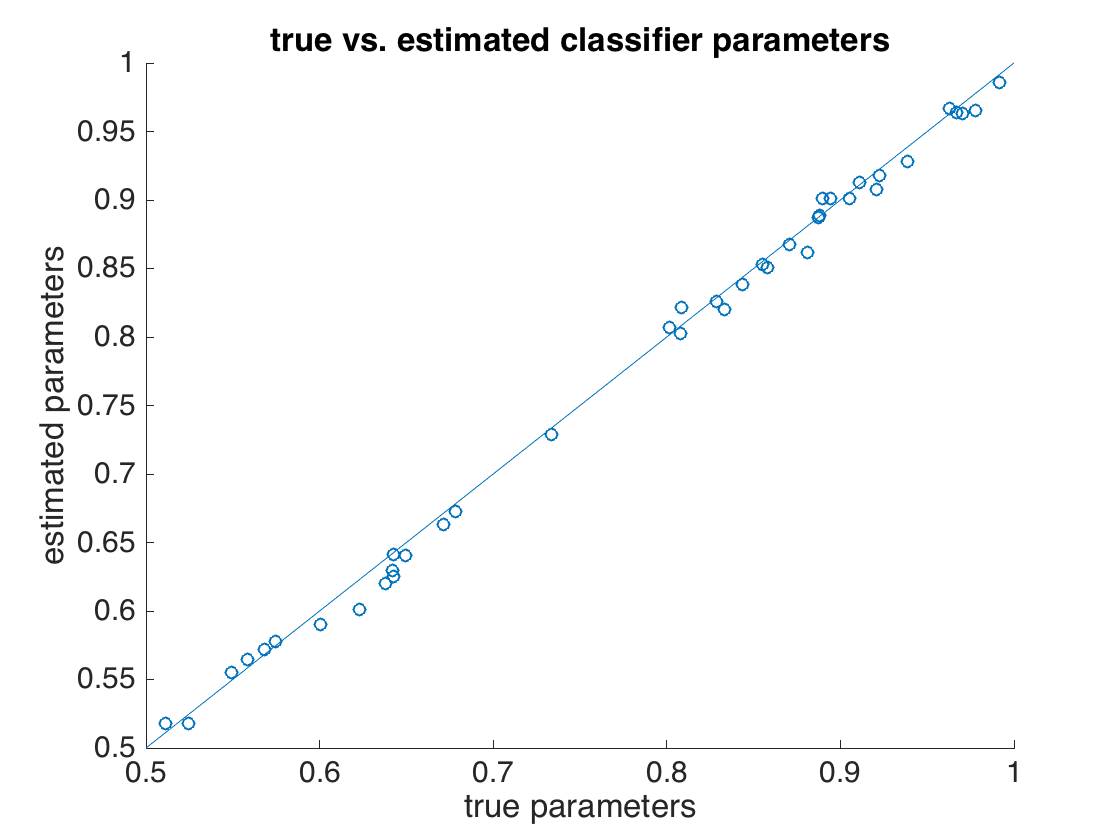}
  \caption{Recovery of the conditional Independence model parameters $\{\psi_i, \eta_i \}$ from a RBM with a single hidden node, on a dataset sampled from a conditional independence model.  The parameters were uniformly sampled from $[0.5,1]$. Each circle corresponds to a single parameter (e.g., $\psi_i$ for some $i$). For convenience, the identity line was added to the plot.}
  \label{fig:paramsRecovery}
 \end{figure}
 
On the Tree15-3-1 dataset, L-SML, which is tailored for data generated by a tree, outperforms the DNN. This result is expected, since it can be shown that the distribution of the bottom two layers of a tree cannot be parametrized as a RBM (see Appendix~\ref{app:approxMeanField}). Still, the DNN performs significantly better than DS, CUBAM and majority vote, and not far from the supervised learner and the optimal Bayes classifier. Figure~\ref{fig:tree} shows the correlation matrix at the input and hidden layers, as well as the first layer weight matrix, demonstrating that the DNN captured the true data generation model. Consequently, the 3 hidden units are nearly conditionally uncorrelated given the label $y$. 

\begin{figure*}[h!]
  \centering
  \includegraphics[width=2.5in,height=2in]{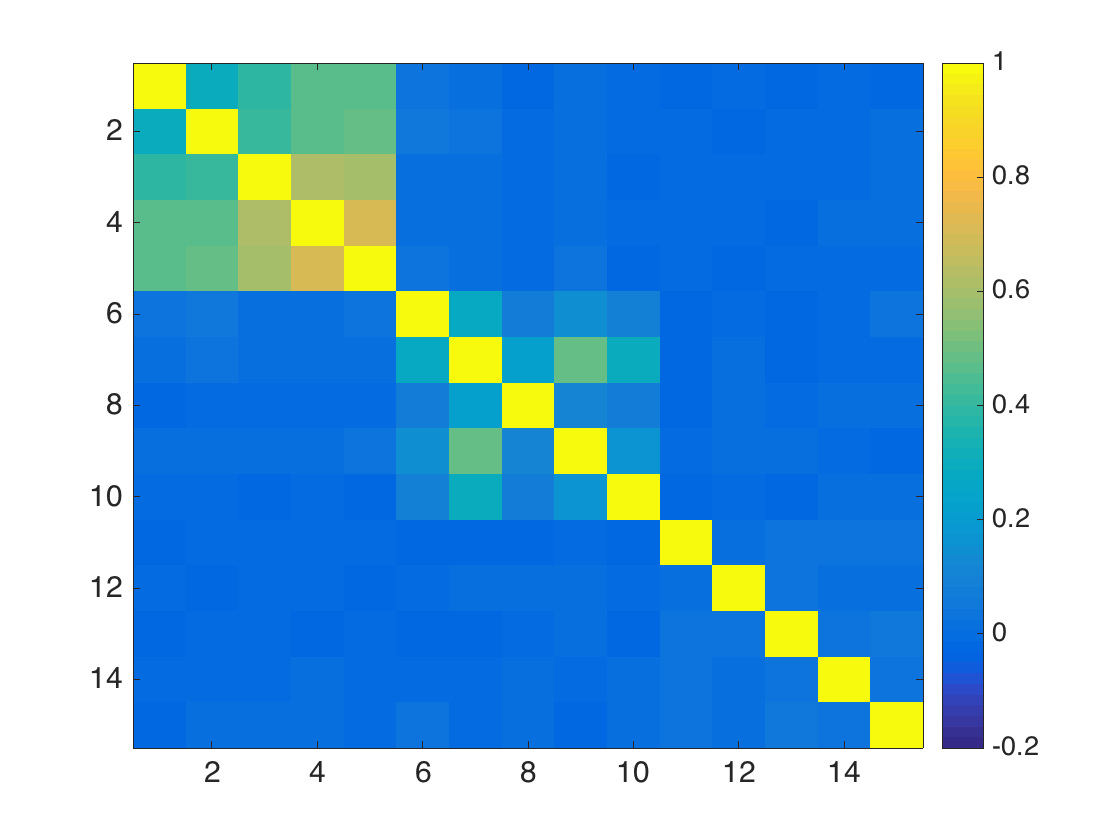}
  \includegraphics[width=2.5in,height=2in]{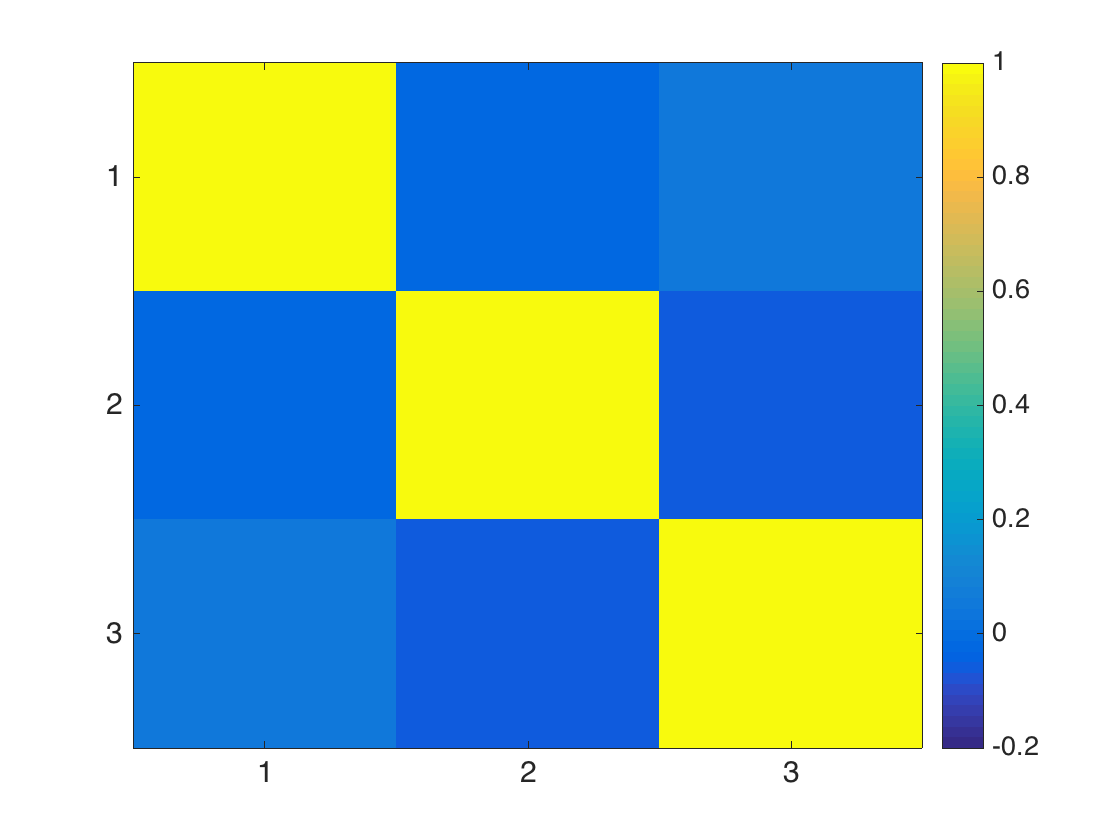}
  \includegraphics[width=2.5in,height=2in]{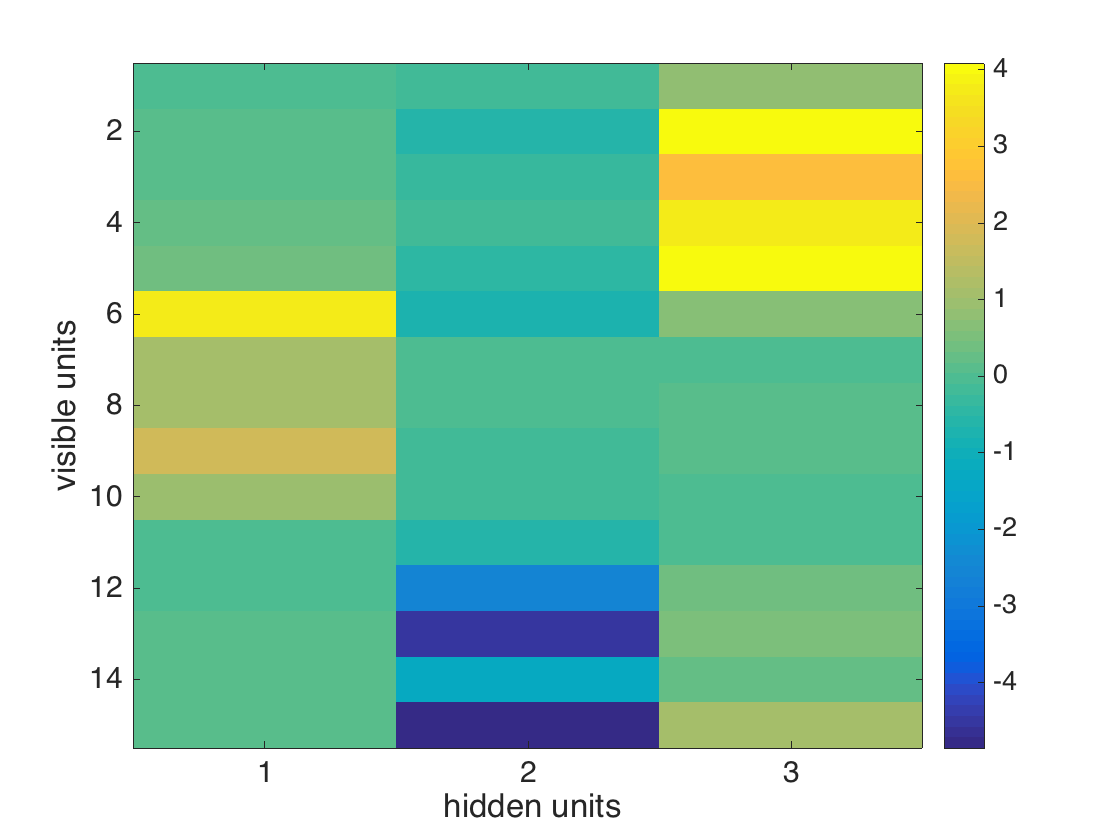}
  \caption{The Tree15-3-1 experiment. Top left: correlation matrix of the input data for the $y=0$ class. The first and middle five $X_i's$ are not conditionally independent of each other.  Top right: correlation matrix of the hidden layer of the DNN for the $y=0$ class. The hidden units are approximately uncorrelated. Bottom: weight matrix of the bottom RBM of the DNN, showing that each hidden unit is strongly connected to 5 visible units, as in the original data generation model.}
  \label{fig:tree}
 \end{figure*}
 
Figure~\ref{fig:cumulative} shows the cumulative proportion of the singular values on the condInd and Tree15-3-1 datasets, which explains the architecture determined by the SVD approach for both datasets.
 \begin{figure}[t]
  \centering
  \includegraphics[width=3in,height=1.5in]{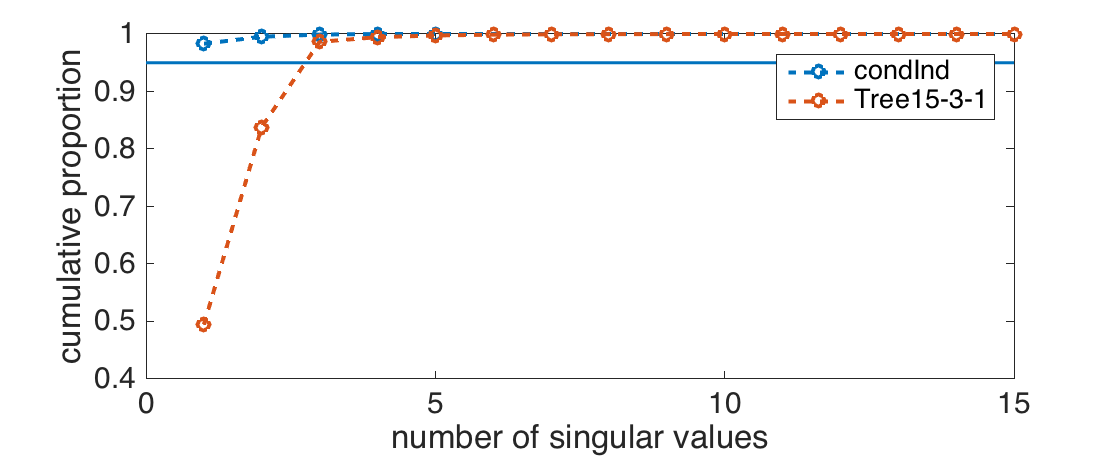}
  \caption{Cumulative proportion of singular values on the condInd and Tree15-3-1 datasets. While in the condInd case the first singular value is more than 95\% of the total sum of singular values, the first three singular values are needed on the Tree15-3-1 dataset. The horizontal line at 0.95 is added to the plot for convenience.}
  \label{fig:cumulative}
 \end{figure}

On the LayeredGraph15-5-5-1 dataset, while outperforming the other methods, the DNN achieved accuracy close to the supervised learner and the Bayes optimal accuracy; however, the chosen DNN architecture is different from the one of the true data generation model. 

The conditional independence assumption is strongly violated in the case of the TruncatedGaussian dataset. Here the DNN performs better than all other methods by a large margin.


\subsection{Real-World Datasets}
In this section we experiment with two groups of datasets, from two different domains, as follows:
\begin{itemize}

\item \textbf{DREAM} 
Three datasets from the DREAM mutation calling challenge~\mbox{\cite{ewing2015combining}}; this challenge is an international effort to improve standard methods for identifying cancer-associated mutations and rearrangements in whole-genome sequencing data. 
The accuracy of current variant calling algorithms is not optimal due to sequencing errors, other experimental factors, parametric choices in each algorithm and preprocessing and filtering decisions. 
Unsupervised ensemble learning of multiple variant callers is expected to provide more robust predictions. 
One of the goals of this challenge is to
develop a state-of-the-art meta pipeline for somatic mutation
detection, to output accurate as possible mutation calls associated with
cancer.
Specifically, we used three datasets, (S1, S2, S3) containing the predictions of classifiers that determine the presence or absence of of mutations in genome sequencing data. The data is available at~\citep{DREAM}. In S1, $d = 124$, $n = 92,362$. In S2, $d$ = 114, $n$ = 70,561. In S3, $d = 99$, $n = 78,643$.

\item \textbf{Magic} Forty datasets, which are constructed from the Magic dataset in the UCI repository, available at \url{https://archive.ics.uci.edu/ml/datasets/MAGIC+Gamma+Telescope}. 
This dataset contains $n = 19,020$ instances with 11 attributes, which consists of physical measurements of gamma particles; the learning task is to classify each instance as background or high energy gamma rays.
Each of the five datasets we constructed contains binary predictions of $d=16$ classifiers, obtained in the Weka machine learning software. 
The 16 classifiers belong to four groups: four random forest classifiers, three logistic trees classifiers, four SVM classifiers, and five naive Bayes classifiers. This setting is adopted from~\cite{jaffe2015unsupervised}. The group of SVM classifiers is highly correlated, as well as the group of Naive Bayes classifiers, as can be seen in Appendix~\ref{app:magic}.
Each of the forty datasets was obtained by predictions of the same classifiers, however trained on a different subset of the original Magic dataset (a random subset of size 500 each time).
\end{itemize}

Table~\ref{tab:results3} shows the performance of the various methods on the DREAM datasets.
\begin{table*}[t]
\centering
\begin{tabular}{ l || c | c | c | c | c}
  \hline                        
  Dataset  & Vote  & DS & CUBAM & L-SML  & DNN \\ \hline\hline  
  S1  &  97.2 * & 98.3 * & 92.31 & \textbf{98.4} * & \textbf{98.42 $\pm$  0.0} (124-1)\\ \hline  
  S2  &  96   * & 97.2 * & 69.19 & \textbf{97.7} * & 97.55 $\pm$  0.01 (114-1)\\ \hline
  S3  &  95.7 * & 97.7 * & 87.65 & 98.2 * & \textbf{98.51 $\pm$  0.01} (99-25-1) \\ \hline

  \end{tabular} 
  \caption{Balanced accuracy of various methods on the DREAM datasets S1, S2 and S3. DNN results are averaged over 5 repetitions, and are presented as mean $\pm$ standard deviation. The numbers in brackets denotes the architecture of the DNN, found by the SVD approach. * results reported in~\citep{jaffe2015unsupervised} }
  \label{tab:results3}
\end{table*}
As can be seen, the DNN and L-SML performs similarly on S1, while the former performs better on S3 and the latter on S2. The two methods outperform the majority vote rule, DS and CUBAM on all three datasets.
Remarkably, the hidden representation on the S3 dataset is such that the units are perfectly uncorrelated, conditioned on the hidden label. This is shown in Figure~\ref{fig:S3}.
\begin{figure*}[h!]
  \centering
  \includegraphics[width=3.0in,height=2.0in]{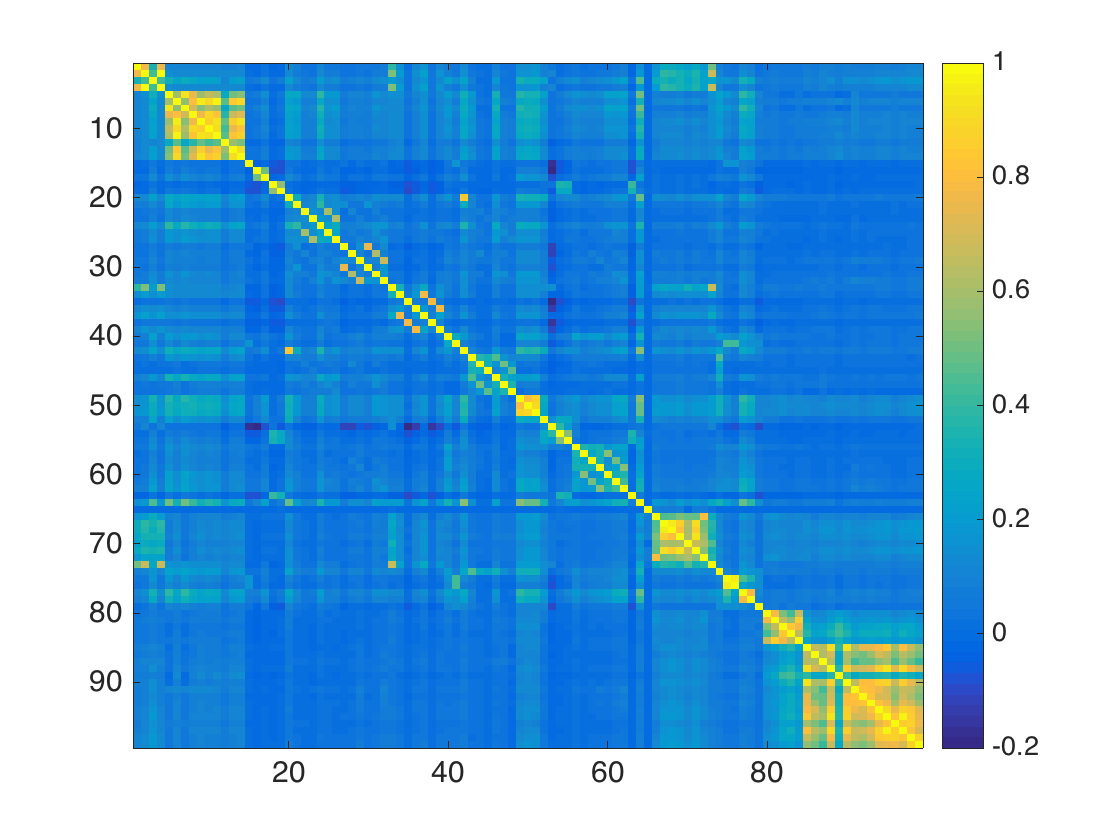}
  \includegraphics[width=3.0in,height=2.0in]{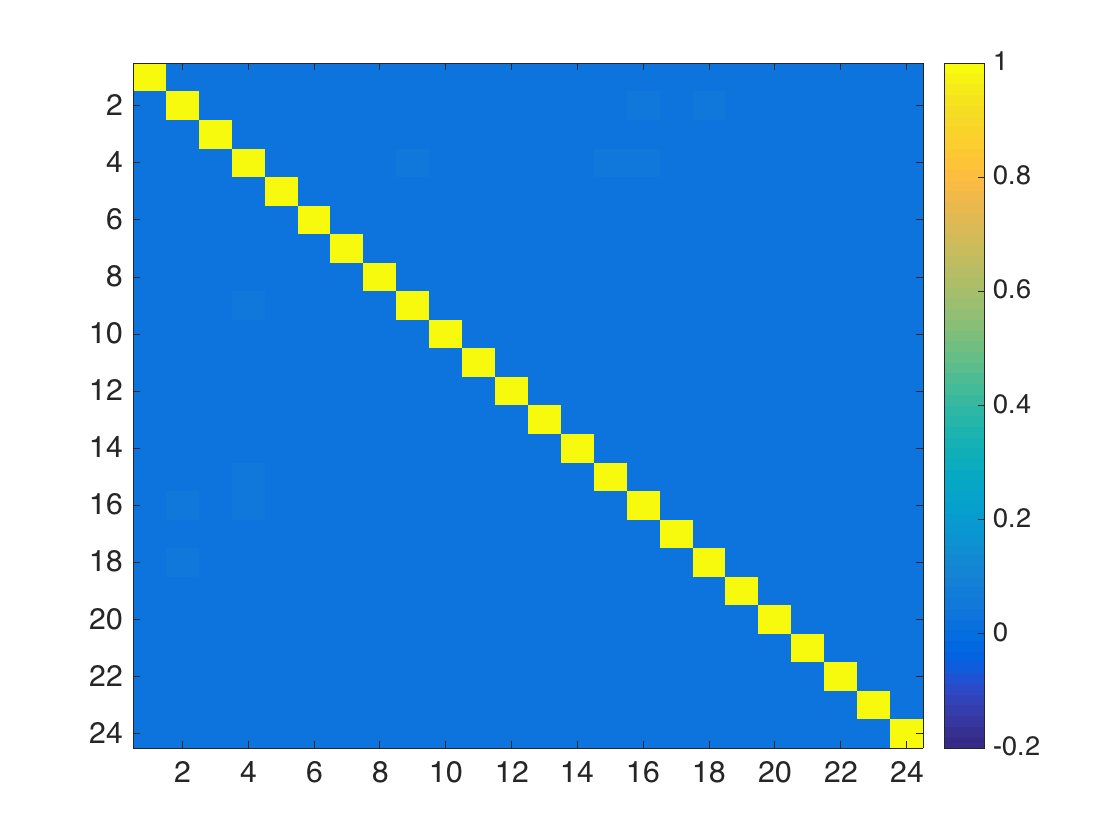}
  \caption{correlation matrices of the input (left) and hidden (right) layers of the DNN on the S3 dataset, for the $y=0$ class. Remarkably, the hidden units are almost perfectly uncorrelated, conditioned on the class.}
  \label{fig:S3}
 \end{figure*}

The results on the Magic datasets are shown in Figure~\ref{fig:magicResults}. On most of these datasets, the DNN outperforms all other methods, with a relatively large margin. On all forty datasets, the SVD approach yielded a 15-3-1 architecture. 
\begin{figure}[h!]
  \centering
  \includegraphics[width=3.0in,height=2.0in]{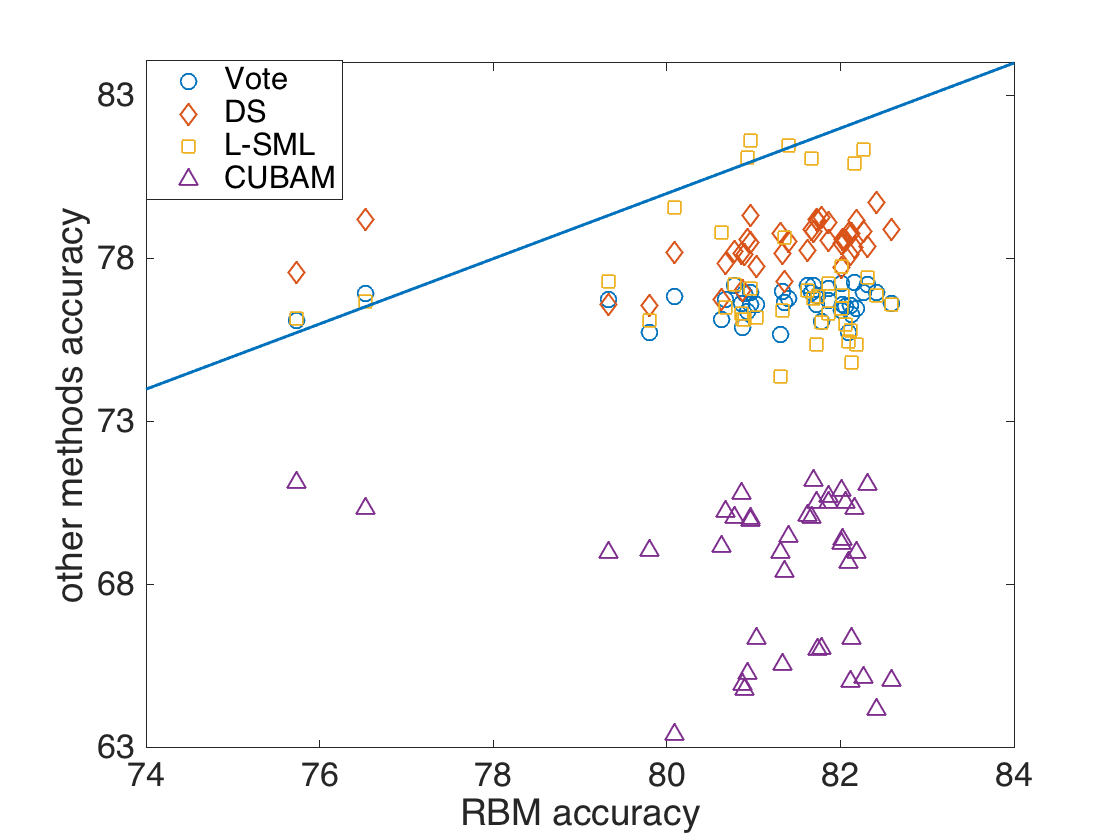}
  \caption{Performance of the various methods on the Magic datasets. For convenience, the identity line is added to the plot. Most of the points are below the identity line, which indicates that the DNN tend to outperform all other methods on these datasets.}
  \label{fig:magicResults}
 \end{figure}
 
To summarize our experiments, we observed that RBM-based DNN performs at least as well and often better than various other methods, on both simulated and real datasets, and that the SVD approach can serve as an effective tool for determination of the DNN architecture. 

We remark that in our experiments, we observed that RBMs tend to be highly sensitive to hyper-parameter tuning (such as learning rate, momentum, regularization type and penalty), and these hyper-parameters need to be carefully tuned.
To obtain a reasonable hyper-parameter setting we found it useful to apply the random configuration sampling procedure, proposed in~\citep{bergstra2012random}, and evaluate different models by average log-likelihood approximation, (see, for example, \citep{salakhutdinov2008quantitative} and the corresponding MATLAB scripts in~\citep {Rus}).


\section {Summary and Discussion}\label{sec:conclusions}
We demonstrated how deep learning techniques can be used for unsupervised ensemble learning,
and showed that the DNN approach proposed in this manuscript often performs at least as well and often better than state-of the art methods, especially when the conditional independence assumption made by \citet{dawid1979maximum} does not hold. 

Possible directions for future research include extending the approach to multiclass problems, possible using Discrete RBMs~\cite{montufar2013discrete}, theoretical analysis of the SVD approach, and information theoretic analysis of the de-correlation, while preserving label information, that occurs while propagating data through a RBM-based DNN.

\section*{Acknowledgements}
The authors would like to thank George Linderman, Alex Cloninger, Tingting Jiang, Raphy Coifman, Sahand Negahban, Andrew Barron, Alex Kovner, Shahar Kovalsky, Maria Angelica Cueto, Jason Morton, and Brend Strumfels for their help.

\bibliography{RBM2}
\bibliographystyle{apalike}


\clearpage

\appendix

\section{Proof of Lemma~\ref{lemma:map}} \label{app:lemmaMap}

\begin{proof}
We will define $\theta$ so that for every $x,y$, $p_\theta(X_i=x_i|Y = y) = p_\lambda(X_i = x_i|H=y)$ and $p_\theta(Y=y) = p_\lambda(H=y)$.

Since the weight matrix $W$ has dimension $d \times 1$ in this case, it is a vector, which we will denote as $w$. Recall that 
\begin{equation}
p_\lambda(X_i=1|H=y) = \sigma(a_i + w_iy), \notag
\end{equation}
hence we define
\begin{equation}
\psi_i \equiv \sigma(a_i + w_i) \notag
\end{equation}
and
\begin{equation}
\eta_i \equiv 1-\sigma(a_i). \notag
\end{equation}
Finally, recall that
\begin{align}
p_\lambda(H=1) &= \frac{\sum_{x \in \{0,1\}^d} e^{-E_\lambda(x,1)}}{\sum_{x \in \{0,1\}^d,\; h \in \{0,1 \}}e^{-E_\lambda(x,h)}} \notag\\
&= \frac{\sum_{x \in \{0,1\}^d} e^{a^Tx + b  + x^Tw}}{\sum_{x \in \{0,1\}^d,}e^{a^Tx} + e^{a^Tx + b + x^Tw}}, \notag 
\end{align}
where $E_\lambda$ is the energy function given in equation~\eqref{eq:E},
hence we set 
\begin{equation}
\pi \equiv \frac{\sum_{x \in \{0,1\}^d} e^{a^Tx + b  + x^Tw}}{\sum_{x \in \{0,1\}^d,}\left(e^{a^Tx} + e^{a^Tx + b + x^Tw}\right)}. \label{eq:pi}
\end{equation}
To see that the map $\lambda \mapsto \theta$ is 1:1, note that $a_i$ uniquely determines $\eta_i$, hence $(a_i,w_i)$ uniquely determine $(\psi_i, \eta_i)$. Lastly, rearranging equation~\eqref{eq:pi} we get
\begin{align}
&\pi\sum_{x \in \{0,1\}^d} \left( e^{a^Tx} + e^{a^Tx + b + w^Tx}\right) = \sum_{x \in \{0,1\}^d} e^{a^Tx + b + w^Tx}\notag \\
\Rightarrow &\pi\sum_{x \in \{0,1\}^d}e^{a^Tx} = (1-\pi)e^b \sum_{x \in \{0,1\}^d}e^{a^Tx + w^Tx}\notag \\
\Rightarrow & e^b = \frac{\pi}{1-\pi} \frac{\sum_{x \in \{0,1\}^d}e^{a^Tx}}{\sum_{x \in \{0,1\}^d}e^{a^Tx + w^Tx}},\notag
\end{align}
so that given $(a,W)$, $\pi$ is uniquely determined by $b$.
Showing that the map $\lambda \mapsto \theta$ is a also subjective is straightforward. Hence it is a bijection.
\end{proof}


\section{Proof of Lemma~\ref{lemma:condInd}} \label{app:lemmaRBMSolves}

\begin{proof}
Since $d \ge 3$ and for each $i$, $X_i$ is not independent of $Y$, by~\citet{chang1996full}, the parameter $\theta$ of the conditional independence model is identifiable. 
Since the map $\lambda \mapsto \theta$ in Lemma~\ref{lemma:map} is a bijection, there exists $\lambda$ corresponding to $\theta$, which is therefore identifiable as well.
By the consistency property of the MLE (see, for example,~\citep{casella2002statistical}), 
\begin{equation}
\lim_{n \rightarrow \infty} \hat{\lambda}_\text{MLE} = \lambda. \notag
\end{equation}
Since $p_\lambda(H=1|X)$ is continuous in $\theta$, one obtains
\begin{equation}
p_{\hat{\lambda}_\text{MLE}}(H=1|X) \rightarrow p_\lambda(H=1|X). \notag
\end{equation}

Finally, note that Lemma~\ref{lemma:map} implies, in particular, that under the map $\lambda \mapsto \theta$
\begin{equation}
p_\lambda(H=1|X) = p_\theta(Y=1|X), \notag
\end{equation}
which completes the proof.
\end{proof}


\section{Stacking RBMs as a Variational Inference Procedure} \label{app:varInf}

Variational inference is a common approach to tackle complicated probability estimation problems (see, for example,~\cite{bishop2006pattern, fox2012tutorial}, and a recent review~\cite{blei2016gvariational}). Specifically, let $p$ be a target probability distribution that we want to approximate. In variational inference we define a family of approximate distributions $\mathcal{D} = \{q_\alpha: \alpha \in \mathcal{A} \}$, and then perform optimization to find the member of $\mathcal{D}$ that is closest to $p$ in Kullback-Leibler distance. A key idea is that the family $\mathcal{D}$ is flexible enough to contain a distribution close to $p$, yet simple enough to perform optimization over. For example, a popular choice is to take $\mathcal{D}$ as the collection of factorized distributions, i.e., of the form $q_\alpha(X) = \prod_i q_\alpha(X_i)$.
In this section, we motivate the use of RBM-based DNN by considering a specific data generation model, and showing that training a stack of RBMs on data generated by this model is in fact a variational inference procedure. 

The generative model we consider is a two layer Deep Belief Network (DBN), which played an important role in the emergence of deep learning in 2006~\cite{hinton2006fast}. The DBN we consider generates data $Y \in \{0,1\}$, $H \in \{0,1\}^m$, $X \in \{0,1\}^d$ via the probability distribution
\begin{equation}
p_\theta(X,H,Y) \equiv p_{\theta_1}(X,H)p_{\theta_2}(Y|H) \notag
\end{equation}
where $X,H$ form a RBM (parametrized by $\theta_1$).

We observe data $x^{(1)}\ldots x^{(n)}$ from $p_\theta(X)$ and our goal is to estimate the posterior $p_\theta(y^{(i)}|x^{(i)})$ for $i=1,\ldots n$. The posterior can be written as
\begin{equation}
p_\theta(Y|X) = \mathbb{E}_{h \sim p_{\theta_1}(H|X)}P_{\theta_2}(Y|H=h). \notag
\end{equation}

\citet{cueto2010geometry} showed that as long as $m$ is not too large comparing to $d$, RBMs are locally identifiable, i.e., identifiable up to order and flips of hidden units (Jason Morton, personal communication).
Therefore, when training a RBM with $m$ hidden units on $x^{(1)}\ldots x^{(n)}$, by the consistency property of the MLE ~\cite{casella2002statistical} the MLE $\hat{\theta}_{1\text{MLE}}$ will converge to the true parameter $\theta_1$ as $n \rightarrow \infty$. Hence, when $n$ is large enough, the vectors $h^{(i)}$ obtained from the (trained) RBM are in fact samples from $p_{\theta_1}(H|X = x^{(i)})$.

At the next step, the vectors $h^{(1)}\ldots h^{(n)}$ are used to train a second RBM, with a single hidden node. Observe that in the data generation model considered in this section, $p_\theta(H|Y)$ does not factorize. 
The factorized distribution $p_\lambda(H|Y)$ that minimizes $\KL(p_{\theta_2}(H|Y)\| p_\lambda(H|Y))$ is given by
\begin{equation}
p_\lambda(H_i|Y) = p_{\theta_2}(H_i|Y)\notag
\end{equation}
~\citet{bishop2006pattern} (Chapter 10).
By Lemma~\ref{lemma:map}, we know that the distribution 
\begin{equation}
p_\lambda(H,Y) = p_\theta(Y)\prod_ip_{\theta_2}(H_i|Y)\label{eq:77}
\end{equation}
is equivalent to a RBM. 
Finally, by Lemma~\ref{lemma:condInd}, the distribution~\eqref{eq:77} is consistently estimated by a RBM trained on vectors $h^{(1)}\ldots h^{(n)}$, and is thus a variational inference procedure.


\section{Stacking RBMs as an Approximation for a Directed Top-Down Model} \label{app:approxMeanField}

Assume that the data is generated by a Markov chain $Y \rightarrow H \rightarrow X$, where $Y \in \{0,1\}$, $H \in \{0,1\}^m$, $X \in \{0,1\}^d$. We further assume that the distributions $p_\theta(X|H),\; p_\theta(H|Y)$ factorize, i.e., 
\begin{equation}
p_\theta(X|H) = \prod_{i=1}^d \Pr(X_i|H) \label{eq:m1}
\end{equation}
and \begin{equation}
p_\theta(H|Y) = \prod_{i=1}^m \Pr(H_i|Y), \label{eq:m2}
\end{equation}
and are given by RBM-like conditional distributions, i.e.,
\begin{equation}
p_\theta(X_i=1|H) = \sigma\left(a_i + W_{i,\cdot}H\right) \label{eq:m3}
\end{equation}
and
\begin{equation}
p_\theta(H_i=1|Y) = \sigma\left(b_i + U_{i,\cdot}Y\right). \label{eq:m4}
\end{equation}
Hence the corresponding data generation probability is parametrized by 
$\theta = \left(\pi, a,b,W,U\right)$, where $\pi = \Pr(Y=1)$.

This data generation process is depicted in Figure~\ref{fig:YHX}.
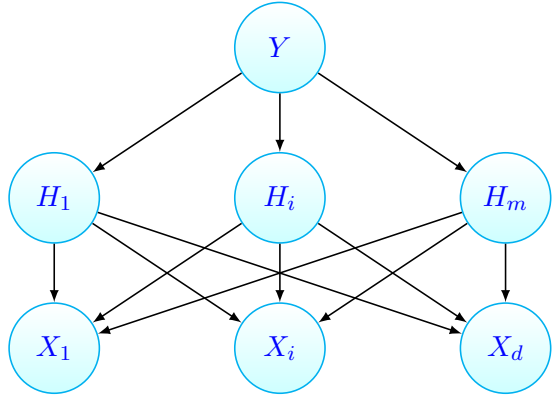
\begin{figure}[ht!]
  \centering
  \begin {tikzpicture}[-latex, auto ,node distance =4 cm and 5cm ,on grid ,
                semithick ,
                state/.style ={ circle ,top color =white , bottom color = processblue!20 ,
                        draw,processblue , text=blue , minimum width =1.2 cm}]
                \node[state] (Y) at (10,4) {$Y$};
                \node[state] (H1) at (7,2) {$H_1$};
                \node[state] (Hi) at (10,2) {$H_i$};
                \node[state] (Hm) at (13,2) {$H_m$};
                \node[state] (X1) at (7,0) {$X_1$};
                \node[state] (Xi) at (10,0) {$X_i$};
                \node[state] (Xd) at (13,0) {$X_d$};
                \path (H1) edge node [above =0.15 cm,left = 0.15cm] {}(X1);
                \path (H1) edge node [above =0.15 cm,left = 0.15cm] {}(Xi);
                \path (H1) edge node [above =0.15 cm,left = 0.15cm] {}(Xd);
                \path (Hi) edge node [above =0.15 cm,left = 0.15cm] {}(X1);
                \path (Hi) edge node [above =0.15 cm,left = 0.15cm] {}(Xi);
                \path (Hi) edge node [above =0.15 cm,left = 0.15cm] {}(Xd);             
                \path (Hm) edge node [above =0.15 cm,left = 0.15cm] {}(X1);
                \path (Hm) edge node [above =0.15 cm,left = 0.15cm] {}(Xi);
                \path (Hm) edge node [above =0.15 cm,left = 0.15cm] {}(Xd);
                \path (Y) edge node [above =0.15 cm,left = 0.15cm] {}(H1);
                \path (Y) edge node [above =0.15 cm,left = 0.15cm] {}(Hi);
                \path (Y) edge node [above =0.15 cm,left = 0.15cm] {}(Hm);      
        \end{tikzpicture}
  \caption{Data generated by a Markov Chain $Y \rightarrow H \rightarrow X$ with RBM-like conditional probabilities.}
  \label{fig:YHX}
 \end{figure}

The posterior probabilities $p_\theta(Y|X)$ are given by
\begin{align}
p_\theta(Y|X) & = \sum_{H \in \{0,1 \}^m}p_\theta(Y|H)p_\theta(H|X) \notag \\
& = \mathbb{E}_{h \sim p_\theta(H|X)}p_\theta(Y|H=h). \notag
\end{align}
By Section~\ref{sec:condIndCase}, we know that $p_\theta(H,Y)$ is equivalent to a RBM. Therefore, to accurately estimate the posterior, it suffices to approximate $p_\theta(H|X)$.

Under the data generation model described in Figure~\ref{fig:YHX} and equations~\eqref{eq:m1}-\eqref{eq:m4}, it is evident that  the joint distribution $p_\theta(X,H)$ cannot be parametrized as a RBM; indeed, $p_\theta(H|X)$ does not factorize. Hence, training a RBM on samples from $p_\theta(X)$, is a mean field approximation of $p_\theta(H|X)$.
The form of $p_\theta(X,H)$ is shown in the following lemma.
\begin{lemma} \label{lemma:mixture}
Under the data generation model described in Figure~\ref{fig:YHX} and equations~\eqref{eq:m1}-\eqref{eq:m4}, the joint distribution $p_\theta(X,H)$ is given by
\begin{equation}
p_\theta(X,H) = \exp\left(a^TX + X^TWH + b^TH\right)Z(H) \label{eq:rbmMix}\notag
\end{equation}
where
\begin{align}
&Z(H) = \frac{1}{\sum_{X \in \{0,1\}^d} \exp\left(a^TX  + X^TWH\right)}\notag\\
& \times \sum_{Y \in \{0,1\}}\frac{p_\theta(Y) \exp(H^TUY)}{\sum_{H'} \exp\left(b^TH' + H'^TUY\right)} \notag
\end{align}
\end{lemma}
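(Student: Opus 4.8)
The plan is to compute $p_\theta(X,H)$ directly from its factorization $p_\theta(X,H) = p_\theta(X\mid H)\,p_\theta(H)$, evaluating each factor from the defining logistic conditionals \eqref{eq:m1}--\eqref{eq:m4}. The single computational tool I will need, used twice, is the elementary fact that a factorized Bernoulli vector whose coordinate means are sigmoids of affine functions is exactly a normalized Gibbs distribution over the hypercube. Concretely, if $p(V_i=1\mid\cdot)=\sigma(z_i)$ independently across $i$, then, writing $\sigma(z)=e^{z}/(1+e^{z})$ and $1-\sigma(z)=1/(1+e^{z})$, one gets $p(V=v\mid\cdot)=\prod_i e^{z_i v_i}/(1+e^{z_i})=\exp(\sum_i z_i v_i)/\prod_i(1+e^{z_i})$, and the normalizer factorizes over the hypercube, $\sum_{v}\exp(\sum_i z_i v_i)=\prod_i(1+e^{z_i})$, so the denominator may be rewritten as a sum over all binary $v$.

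First I would apply this to $p_\theta(X\mid H)$. With $z_i = a_i + W_{i,\cdot}H$ and $\sum_i z_i x_i = a^TX + X^TWH$, the identity gives
\[
p_\theta(X\mid H)=\frac{\exp\!\left(a^TX + X^TWH\right)}{\sum_{X'\in\{0,1\}^d}\exp\!\left(a^TX' + X'^TWH\right)}.
\]
This already produces the $\exp(a^TX + X^TWH)$ piece of the claimed prefactor together with the reciprocal sum over $X$ appearing in $Z(H)$.

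Next I would compute the marginal $p_\theta(H)=\sum_{Y\in\{0,1\}}p_\theta(Y)\,p_\theta(H\mid Y)$. Applying the same identity to $p_\theta(H\mid Y)$, now with $z_i = b_i + U_{i,\cdot}Y$, yields $p_\theta(H\mid Y) = \exp(b^TH + H^TUY)/\sum_{H'}\exp(b^TH' + H'^TUY)$. Factoring $\exp(b^TH)$ out of the sum over $Y$ gives
\[
p_\theta(H)=\exp\!\left(b^TH\right)\sum_{Y\in\{0,1\}}\frac{p_\theta(Y)\exp\!\left(H^TUY\right)}{\sum_{H'}\exp\!\left(b^TH' + H'^TUY\right)}.
\]
Multiplying the two factors and collecting the exponentials into $\exp(a^TX + X^TWH + b^TH)$ leaves precisely the two quotients that constitute $Z(H)$, which completes the proof.

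I do not expect a genuine obstacle here: the argument is a direct substitution with no nonconstructive step. The only points demanding care are the bookkeeping of the vector/matrix notation---in particular that the bilinear coupling is $X^TWH$ and that $Y$ is a scalar, so that $H^TUY$ and $H'^TUY$ are well defined---and the recognition that the per-coordinate normalizers $\prod_i(1+e^{z_i})$ collapse into exactly the hypercube sums over $X$ and over $H'$ that appear in $Z(H)$. The subtlety is thus notational rather than mathematical.
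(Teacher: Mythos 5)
Your proof is correct and follows essentially the same route as the paper: both expand $p_\theta(X,H)$ as $\sum_{Y}p_\theta(Y)p_\theta(H\mid Y)p_\theta(X\mid H)$ (your $p_\theta(X\mid H)p_\theta(H)$ is the same computation), rewrite each factorized sigmoid conditional as a Gibbs distribution normalized by a hypercube sum, and collect terms. Your version is if anything slightly more explicit than the paper's, since you justify the Gibbs form via the identity $\prod_i(1+e^{z_i})=\sum_{v}\exp(\sum_i z_i v_i)$ rather than simply asserting it.
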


\begin{proof}
By definition, 
\begin{align}
p_\theta(X,H) &= \sum_{Y \in \{0,1\}} p_\theta(X,H,Y) \notag \\
& = \sum_{Y \in \{0,1\}}p(Y)p_\theta(H|Y)p(X|H) \label{eq:mix}
\end{align}
Writing 
\begin{equation}
p_\theta(X|H) = \frac {\exp\left(a^TX + X^TWH\right)}{\sum_{X' \in \{0,1\}^d} \exp\left(a^TX' + X'^TWH\right)}\notag
\end{equation}
and similarly
\begin{equation}
p_\theta(H|Y) = \frac {\exp\left(b^TH + H^TUY\right)}{\sum_{H' \in \{0,1\}^m} \exp\left(b^TH' + H'^TUY\right)},\notag
\end{equation}
we obtain
\begin{align}
&p_\theta(X|H)p_\theta(H|Y) = \notag\\
&\frac{\exp\left(a^TX + X^TWH + b^TH + H^TUY\right)}{\left(\sum_{X'} \exp\left(aTX' + X'^TWH\right)\right)\left(\sum_{H'} \exp\left(b^TH' + H'^TUY\right)\right)}. \label{eq:complicated}
\end{align}
Plugging equation~\eqref{eq:complicated} in equation~\eqref{eq:mix} we get
\begin{align}
& p_\theta(X,H) = \exp\left(a^TX + X^TWH + b^TH\right)\notag\\
& \times \frac{1}{\sum_{X'} \exp\left(aTX' + X'^TWH\right)} \notag\\
& \times \sum_{Y \in \{0,1\}}\frac{p_\theta(Y) \exp(H^TUY)}{\sum_{H'} \exp\left(b^TH' + H'^TUY\right)} \notag
\end{align}
\end{proof}

From lemma~\ref{lemma:mixture} we see that $p_\theta(H|X)$ is close to be factorizable if $Z(H)$ is a approximately a log-linear function of $H$ and $p_\theta(X)$ is approximately a  log-linear function of $X$.


\section{Datasets used for our experiments}

\subsection{Simulated Dataset Generation Details}\label{app:datasets}

\begin{itemize}
\item {\bf CondInd}: the label $Y$ was sampled from a Bernoulli(0.5) distribution; The specificity $\eta_i$ and sensitivity  $\psi_i$ of the variables $X_i,\; i=1\ldots 5$ were sampled uniformly from $[0.5,1]$. The other ten $X_i$'s were random guesses, i.e., had specificity = sensitivity = $0.5$. 
\item {\bf Tree15-3-1}: the label $Y$ was sampled from a Bernoulli(0.5) distribution; each node in the intermediate and layer was generated from his parent with  specificity and sensitivity sampled uniformly from $[0.8,1]$, and in the bottom layer with  specificity and sensitivity sampled uniformly from $[0.6,1]$. 
\item {\bf LayeredGraph15-5-5-1}:  Data is generated from a Layered Graph with four layers of dimensions 1,5,5,15, starting at the true label $Y$. Each layer in the graph is generated from the above layer, and the graph has sparse connectivity (about 30\% of the edges exist). For every node $i$ and parent $j$ we sample specificity $\psi_{ij}$ and sensitivity $\eta_{ij}$ uniformly. Finally, the value at each node was calculated as the weighted sum of the probabilities of the node being 1 given the values of the nodes in the preceding layer, normalized by the sum over the edges. The label $Y$ was sampled from a Bernoulli(0.5) distribution.
\item {\bf TruncatedGaussian}: the label $Y$ was sampled from a Bernoulli(0.5) distribution. One Gaussian had mean vector $\mu_1$ were each of the 15 coordinates was sampled uniformly. The other Gaussian had mean vector $\mu_2 = -\mu_1$. Both Gaussians had identical covariance matrix, with off diagonal entries of $0.5$ and diagonal entries of $1$.

\end{itemize}

\begin{figure*}[ht!]
  \centering
  \includegraphics[width=2.5in,height=2in]{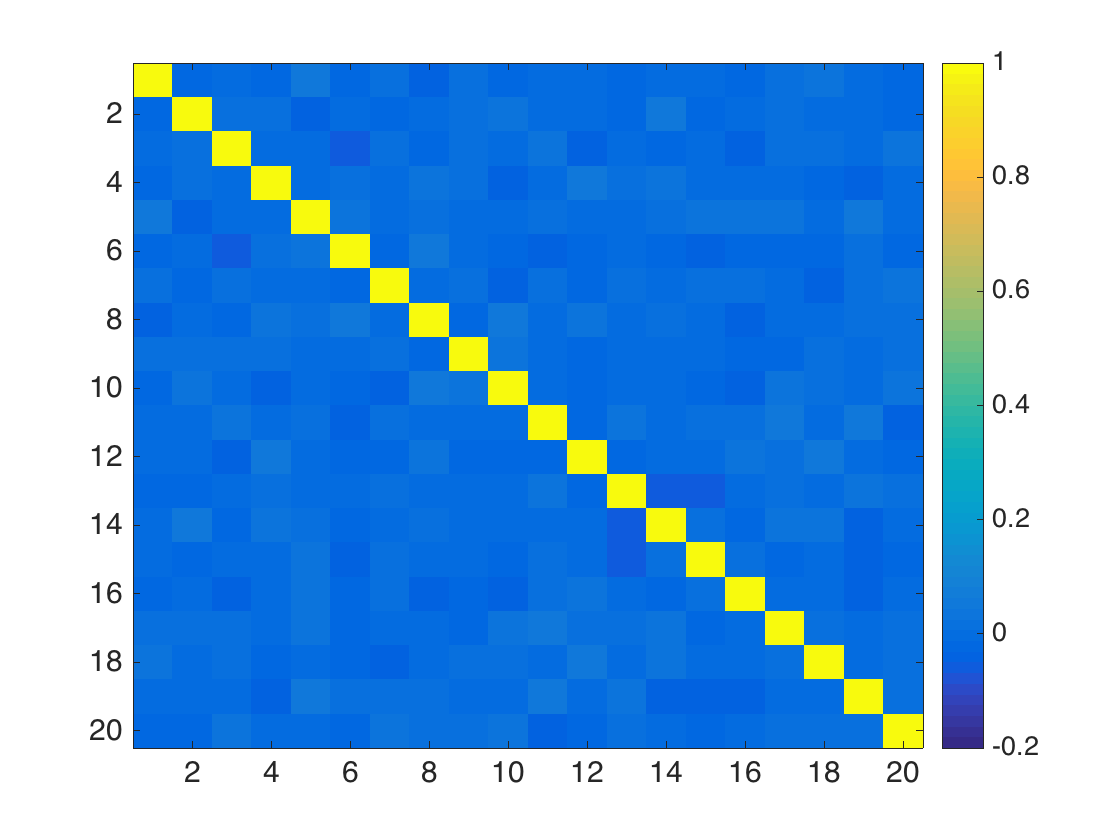}
  \includegraphics[width=2.5in,height=2in]{treeCondCorrelBefore.png}
  \includegraphics[width=2.5in,height=2in]{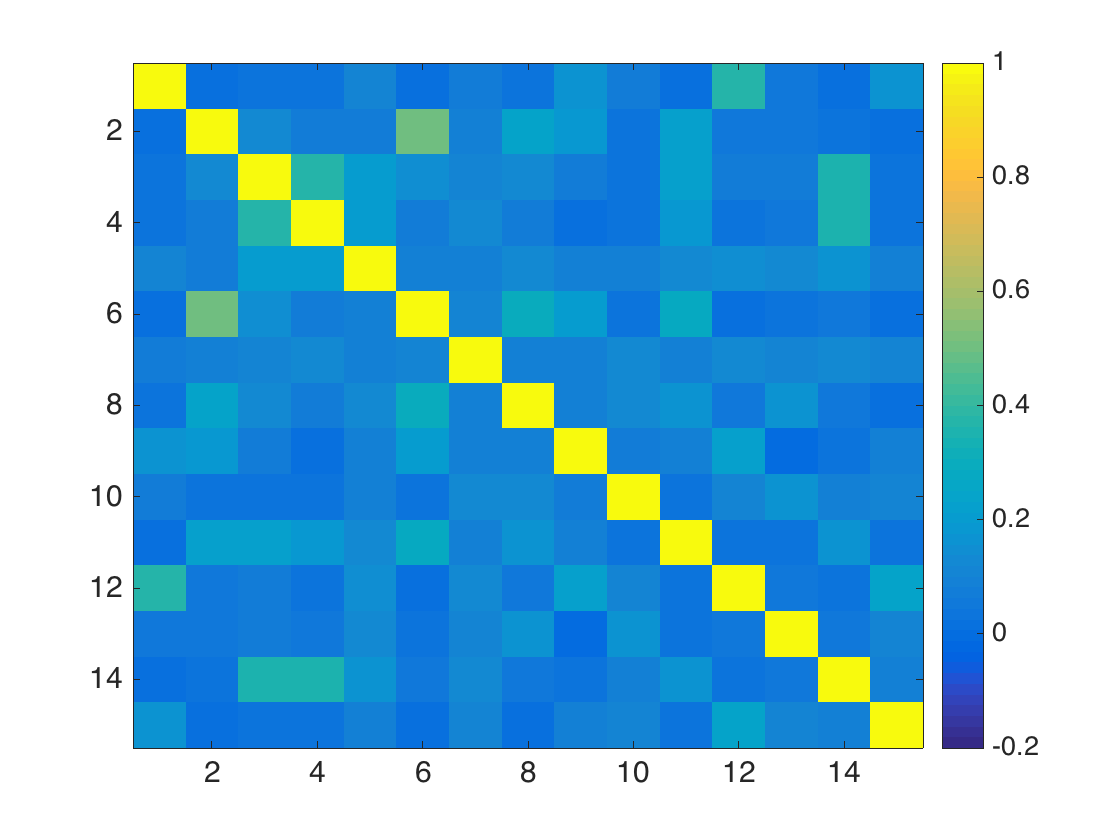}
  \includegraphics[width=2.5in,height=2in]{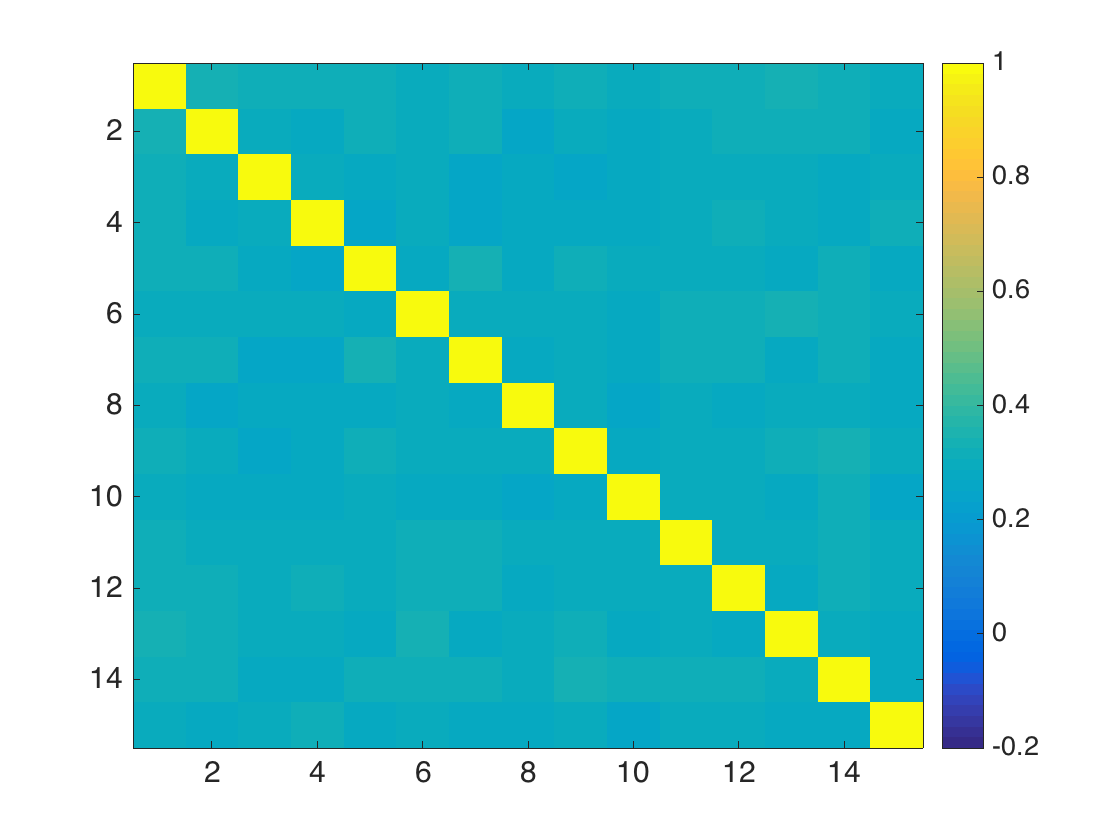}
  \caption{correlation matrices of the input data, for the $y=0$ class in all four simulated datasets: condInd (top left), tree15-3-1 (top right), LayeredGraph (bottom left), TruncatedGaussian (bottom right).}
  \label{fig:datasets}
 \end{figure*}
 

 \subsection{The Magic Datasets}\label{app:magic}
 An example for the correlation matrix of the 16 classifiers given the 0 class can be seen in Figure~\ref{fig:magic}.
 \begin{figure}
  \centering
  \includegraphics[width=3in,height=2in]{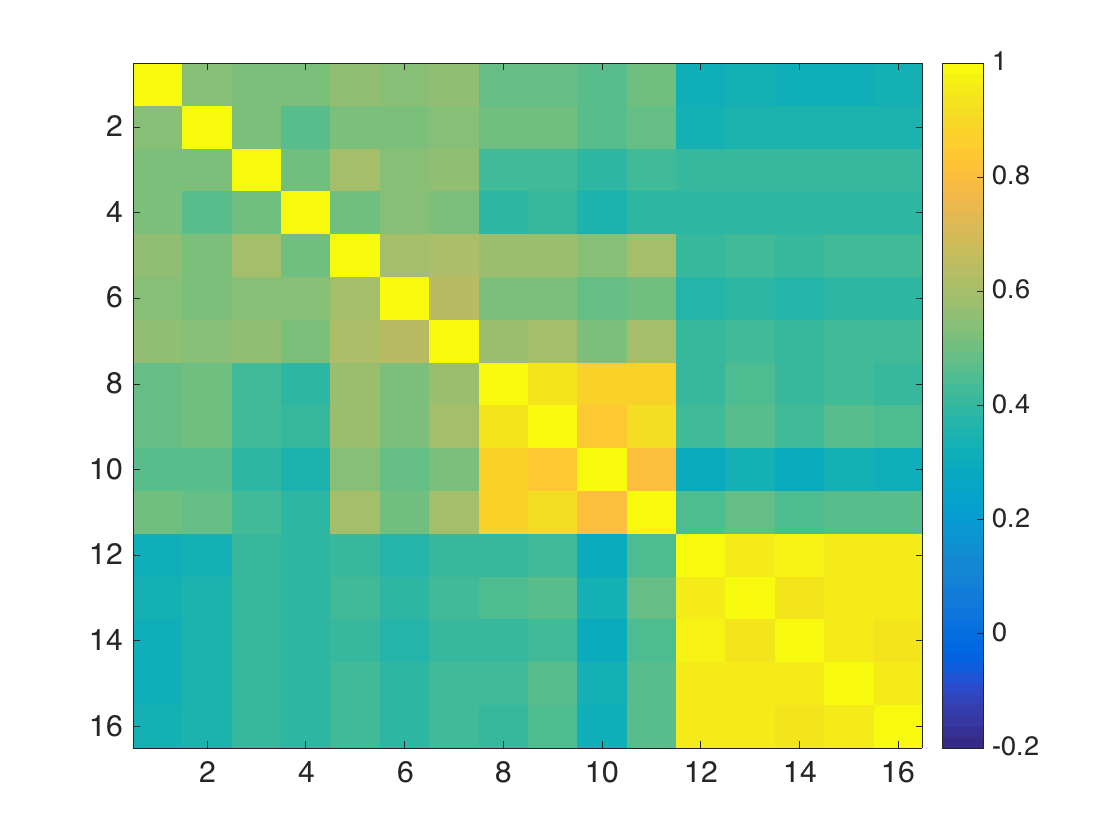}
  \caption{correlation matrix of the 16 classifiers in the Magic1 dataset, for the $y=0$ class.}
  \label{fig:magic}
 \end{figure}


\end{document}